\documentclass[lettersize,journal]{IEEEtran}
\usepackage{amsmath}
\usepackage{algorithmic}
\usepackage{algorithm}
\usepackage{array}
\usepackage{stfloats}
\usepackage{url}
\usepackage{multirow}
\usepackage{graphicx}
\usepackage{amsthm}
\fnbelowfloat
\usepackage{booktabs}
\usepackage[table]{xcolor}
\usepackage[noadjust]{cite}
\usepackage{amssymb}
\usepackage[colorlinks,linkcolor=red,anchorcolor=blue,citecolor=green]{hyperref}
\newtheorem{definition}{Definition}
\newtheorem{remark}{Remark}

\newtheorem{theorem}{\textbf{Theorem}}
\newtheorem{lemma}{\textbf{Lemma}}

\newcommand{\mat}[1]{\ensuremath{\mathbf{#1}}}  
\newcolumntype{C}{@{\extracolsep{\fill}}c}
\DeclareMathOperator{\fold}{fold}  

\begin{document}

\title{Pre-Trained Low-Rank Tensor Decomposition for Multi-Dimensional Image Recovery
	\thanks{This research is supported by National Key R\&D Program of China (2025YFA1016400), NSFC (No. 12371456), Sichuan Science and Technology Program (No.  2024NSFSC0038).
	\emph{(Corresponding authors: Ting-Zhu Huang and Xi-Le Zhao.)}
	}
	\thanks{Bing-Zhang Fu, Zhi-Long Han, Ting-Zhu Huang, and Xi-Le Zhao are with the School of Mathematical Sciences, University of Electronic Science and Technology of China, Chengdu 611731, China (e-mail: bingzhangfu619@163.com; hzlmath@163.com; tingzhuhuang@126.com; xlzhao122003@163.com).}
	\thanks{Deyu Meng is with the School of Mathematics and Statistics and Ministry	of Education Key Lab of Intelligent Networks and Network Security, Xi’an   Jiaotong University, Xi’an, Shaanxi 710049, China, and also with Pazhou	Laboratory (Huangpu), Guangzhou, Guangdong 510335, China (e-mail: dymeng@mail.xjtu.edu.cn).}
}
\author{Bing-Zhang Fu, 
	Zhi-Long Han, 
	Ting-Zhu Huang,
	Xi-Le Zhao,
	and Deyu Meng
}
\maketitle

\begin{abstract}
Recently, tensor decompositions are prevalent for multi-dimensional image representation, which learn the instance-specific structure of each image from scratch.
However, tensor decompositions neglect the common structure across different images, leading to limited semantic modeling capability, high computational cost, and a large number of learnable parameters. 
To address this challenge, we suggest the first pre-trained low-rank tensor decomposition (PLTD) framework, which organically integrates the pre-trained large vision model into the classical tensor decomposition framework.
Beyond the shallow and untrained deep tensor decomposition, the suggested PLTD achieves an unprecedented balance among higher recovery fidelity, fewer learnable parameters, and smaller carbon footprint.
Specifically, PLTD factorizes the target tensor into a latent tensor and a learnable transform that maps the latent tensor back to the original data domain. The latent tensor consists of two indispensable and complementary terms, i.e., a fixed pre-trained latent tensor and a learnable low-rank latent tensor. 
The fixed pre-trained latent tensor is distilled from a pre-trained large vision model (i.e., DINOv3) to capture the common structure of the target tensor, while the learnable low-rank latent tensor characterizes the instance-specific structure of the target tensor.
To examine the potential of PLTD, we develop the corresponding multi-dimensional image recovery model and theoretically justify the advantages of this framework. Additionally, we discuss the connections between PLTD and classical tensor decomposition frameworks. 
Extensive experiments on multi-dimensional image recovery demonstrate that PLTD consistently achieves superior performance compared with state-of-the-art methods.
\end{abstract}

\begin{IEEEkeywords}
Low-rank tensor representation, tensor completion, multi-dimensional data recovery.
\end{IEEEkeywords}

\section{Introduction}\label{sec1}
\IEEEPARstart{M}ulti-dimensional images (i.e., color images, multispectral images, and remote sensing images) are widely used in scientific and engineering applications. In practice, these images are often degraded by noise \cite{8908805, 10078018}, missing entries \cite{xia2026low, zhao2015bayesian, qin2024nonconvex}, or other forms of corruption during acquisition and transmission, which severely limits their practical applications. Since multi-dimensional images generally exhibit strong structural dependencies and redundancy, effectively exploiting their intrinsic structures is essential for reliable recovery.

Low-rank tensor decomposition has witnessed significant advances over the past few years and has been widely used for multi-dimensional image modeling. By exploiting the intrinsic low-rank structures of multi-dimensional images, these methods have achieved strong performance across a broad range of vision tasks, including image denoising \cite{lu2016tensor, he2020non, 9064895, peng2022exact, 6909886}, inpainting \cite{zheng2021fully,9938394, qin2022low, 9115254, feng2023multiplex, zhang2021low}, and other recovery tasks \cite{8478366, 10025664, 9714775, 9288702}. 
However, traditional methods neglect the rich common structure across different images and rely on handcrafted shallow low-rank structures to characterize the instance-specific structure of each target tensor independently. 
Consequently, they are insufficient to capture complex intrinsic dependencies in multi-dimensional images.
This limitation becomes particularly pronounced under severe corruption, where the limited reliable observations make it difficult for shallow low-rank structures to accurately characterize the underlying data structures, resulting in substantial degradation in recovery performance.
\begin{figure}[t]
	\centering
	
	\includegraphics[width=\linewidth]{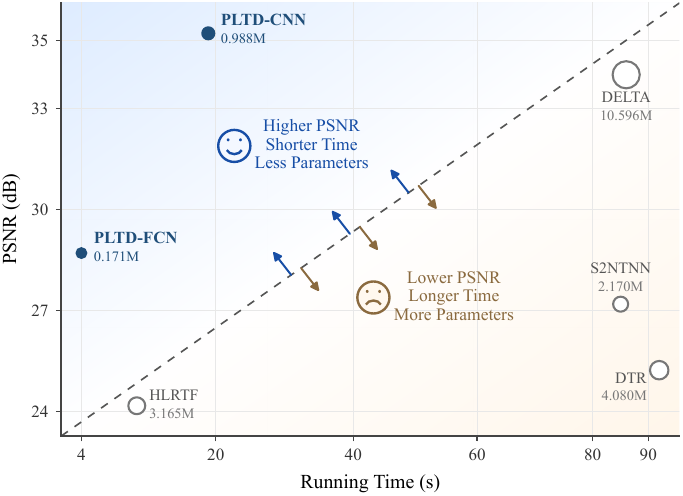}
	
	\caption{Comparison of the balance among recovery accuracy, learnable parameters, and running time. Each point represents the running time and recovery PSNR of a method, with the adjacent annotation indicating the number of learnable parameters. PLTD-FCN and PLTD-CNN achieve a more favorable trade-off among recovery accuracy, the number of learnable parameters, and running time.}
	\label{fig:motivation}
\end{figure}

Beyond handcrafted low-rank structures, deep tensor decomposition methods have attracted increasing attention in recent years \cite{10601492, 11231348, 10652890, luo2022hlrtf,11045408, 11271184, fan2021multi}. Instead of relying solely on shallow low-rank structures, these methods leverage neural networks to model tensor factor \cite{luo2023low, fan2021multi, han2024nested}, nonlinear transform \cite{luo2022hlrtf, 11231348, 9780890}, or latent tensor \cite{11045408, 10652890, zeng2026gaussian}. Recent studies have shown that deep neural networks can better characterize the complex nonlinear structures and often achieve stronger recovery performance than traditional tensor decomposition methods. 
However, existing deep tensor decomposition methods also neglect the common structure shared by different images and still face the following challenges. 
First, existing deep tensor decomposition methods typically employ sophisticated neural networks to learn the instance-specific structure of each target tensor from scratch, which inevitably result in computationally expensive and time-consuming training. 
Second, the large number of parameters makes model optimization more challenging and highly sensitive to network architectures and hyperparameter settings.

These limitations indicate that improving tensor modeling capability should not focus excessively on learning the instance-specific structure of each target tensor from scratch. 
Instead, a more potential direction is to introduce the common structure into tensor decomposition.
Recent advances in large vision models \cite{radford2021learning, kirillov2023segment, caron2021emerging, simeoni2025dinov3} make this strategy particularly appealing. By training on large-scale image datasets, these models have shown a remarkable ability to extract favorable semantic representations that generalize well to various visual tasks \cite{xiao2023dive, pang2024hir, cheng2024transfer, yi2024text}. 
For tensor decomposition, these  semantic representations inherit the generalizable common structure from large-scale images, providing informative priors for multi-dimensional image modeling. 
Nevertheless, how to effectively incorporate the pre-trained large vision model  into low-rank tensor decomposition remains underexplored. 

In this paper, we introduce pre-training into tensor decomposition for the first time and propose a \textbf{P}re-trained \textbf{L}ow-rank \textbf{T}ensor \textbf{D}ecomposition (PLTD) framework. 
The proposed PLTD factorizes the target tensor into a latent tensor and a learnable transform.
Specifically, the latent tensor consists of two indispensable and complementary terms, i.e., a pre-trained latent tensor extracted from the frozen DINOv3 \cite{simeoni2025dinov3} and a learnable low-rank latent tensor, while the transform is implemented by a lightweight neural network.
The pre-trained latent tensor provides the common structure inherited from DINOv3, while the low-rank latent tensor captures the instance-specific structure of the target tensor.
Additionally, the pre-trained latent tensor contains rich semantic and structural information, enabling expressive tensor modeling with only a few learnable parameters.
As shown in Fig.~\ref{fig:motivation}, the proposed PLTD achieves remarkable recovery performance with significantly fewer learnable parameters and less running time.
Our contributions can be summarized as follows:
\begin{enumerate}
	
	\item To the best of our knowledge, we introduce pre-training into tensor decomposition for the first time and propose PLTD. Beyond shallow tensor decomposition and untrained deep tensor decomposition, the proposed PLTD achieves an unprecedented balance among higher recovery fidelity, fewer learnable parameters, and smaller carbon footprint.

	\item Building on PLTD, we suggest the corresponding multi-dimensional image recovery model and theoretically justify the advantages of this framework. Additionally, we discuss the connections between PLTD and classical tensor decomposition frameworks.
	
	\item Extensive experiments show that PLTD achieves state-of-the-art recovery performance across multiple benchmark datasets. Moreover, PLTD demonstrates strong generalization capability across challenging cross-domain images and real-world images. 
	
\end{enumerate}

The remainder of this paper is organized as follows. Section~\ref{sec2} reviews related work on shallow and untrained deep tensor decomposition. Section~\ref{sec3} introduces the notation and preliminaries. Section~\ref{sec4} presents the proposed PLTD framework and the corresponding multi-dimensional image recovery model. Section~\ref{sec5} presents the experimental results. Section~\ref{sec6} discusses the experimental findings and provides insights. Finally, Section~\ref{sec7} concludes the paper.

\section{Related Work}\label{sec2}

\subsection{Shallow Tensor Decomposition Methods}\label{sub1sec2}
Shallow tensor decomposition methods mainly rely on handcrafted shallow low-rank structures to characterize the target tensor. Classical methods include CANDECOMP/PARAFAC (CP) decomposition \cite{hitchcock1927expression}, which represents a tensor as a sum of rank-one components, and Tucker decomposition \cite{6138863}, which factorizes a tensor into a core tensor and a set of matrix factors. Beyond these classical decomposition methods, subsequent methods can be broadly categorized into transform-based decomposition and tensor network decomposition. 

Transform-based decomposition methods capture low-rank structures by transforming the target tensor into an appropriate domain. As a representative framework, tensor singular value decomposition (t-SVD) \cite{hao2013facial} leverages the discrete Fourier transform (DFT) along the third mode and performs matrix factorization in the DFT domain. Subsequent studies explored flexible transforms to obtain more compact low-rank representations. For example, Lu et al. \cite{lu2019low} adopted the discrete cosine transform (DCT), while Song et al. \cite{song2020robust} extended the DFT transform to a general unitary transform. Moving beyond fixed transforms, Jiang et al. \cite{jiang2021dictionary} proposed DTNN, which learns a dictionary-based transform to better capture the intrinsic low-rank structure of the image.
Tensor network decomposition methods leverage different network topologies to characterize the low-rank structure of the target tensor. Representative methods include tensor train (TT) decomposition \cite{oseledets2011tensor}, tensor ring (TR) decomposition \cite{zhao2016tensor}, and fully-connected tensor network (FCTN) decomposition \cite{zheng2021fully}. These methods have achieved promising performance in multi-dimensional image recovery and have demonstrated strong modeling capability for other higher-order data \cite{xu2025fold, 7859390, zhou2019bayesian, yuan2019tensor}.

Nevertheless, shallow tensor decomposition methods are fundamentally built upon shallow low-rank structures to characterize the instance-specific structure of each target tensor, limiting their ability to characterize complex and nonlinear structures in multi-dimensional image.

\subsection{Untrained Deep Tensor Decomposition Methods}\label{sub2sec2}
Unlike shallow tensor decomposition methods that rely on handcrafted low-rank structures, untrained deep tensor decomposition leverages neural networks to achieve more expressive tensor modeling. 
For classical tensor decomposition, Fan et al. \cite{fan2021multi} introduced neural networks into Tucker decomposition to characterize the factor matrices, thereby capturing complex nonlinear dependencies along different tensor modes. Luo et al. \cite{luo2023low} further introduced implicit neural representations (INRs) to model the factor matrices, enabling a continuous characterization of the underlying data structure.

For transform-based decomposition, untrained deep methods typically replace these transforms with nonlinear neural networks. For example, Luo et al. \cite{9780890, luo2022hlrtf} replaced the DFT with a fully connected network (FCN) to better capture the low-rank structure of the latent tensor. Yang et al. \cite{11231348} further introduced the Transformer \cite{3295222.3295349} as the nonlinear transform to capture long-range dependencies and diverse patterns across multiple subspaces of the data.
The development of transform-based tensor decomposition has mainly focused on designing the transform, whereas the structure of
the latent tensor remains relatively underexplored. Recent studies have leveraged neural networks to more effectively characterize the latent tensor. For example, Zeng et al. \cite{zeng2026gaussian} employ Gaussian splatting to characterize the latent tensor, improving the representation of localized high-frequency details. Saragadam et al. \cite{10652890} use 2D convolutional networks to generate the latent tensor, enabling nonlinear modeling of complex spatial dependencies.
In parallel, recent studies have introduced neural networks into tensor network decomposition. By incorporating neural networks into handcrafted tensor network topologies, these methods \cite{Xu_2026_CVPR, han2024nested} enhance the flexibility of traditional tensor network models in characterizing complex dependencies of higher-order tensor.

However, these methods neglect the common structure across multi-dimensional images. They rely on complex neural networks to learn the instance-specific structure of each target tensor from scratch, resulting in computationally expensive and time-consuming training.
Additionally, increasingly complex network architectures further limit the practicality of deep tensor decomposition.

\section{Notation and Preliminaries}\label{sec3}
Throughout this paper, scalars, vectors, matrices, and tensors are denoted by $x$, $\mat{x}$, $\mat{X}$, and $\mathcal{X}$, respectively. For an $N$th-order tensor $\mathcal{X} \in \mathbb{R}^{I_1 \times I_2 \times \cdots \times I_N}$, $\mathcal{X}(i_1,i_2,\ldots,i_N)$ denotes its $(i_1,i_2,\ldots,i_N)$th entry. For a third-order tensor $\mathcal{X}\in\mathbb{R}^{n_1\times n_2\times n_3}$, a frontal slice is defined by fixing the third-mode index, i.e., $\mathcal{X}(:,:,i)$, where $i=1,2,\ldots,n_3$. For an $N$th-order tensor $\mathcal{X}$, its Frobenius norm is defined as
$\|\mathcal{X}\|_F := ( \sum_{i_1=1}^{I_1} \cdots \sum_{i_N=1}^{I_N} \mathcal X(i_1,\ldots,i_N)^2 )^{1/2}$.
Fibers are the higher-order analogue of matrix rows and columns. A fiber is defined by fixing every index but one \cite{kolda2009tensor}. For a tensor $\mathcal{X} \in \mathbb{R}^{I_1 \times I_2 \times \cdots \times I_N}$, the mode-$n$ fiber vectors are denoted by
$ \mathcal{X}(i_1, i_2, \cdots, i_{n-1}, :, i_{n+1}, \cdots, i_N) \in \mathbb{R}^{I_n}. $

\begin{definition}[Mode-$n$ Unfolding \cite{kolda2009tensor}]
	For an $N$th-order tensor $\mathcal{X}\in\mathbb{R}^{I_1\times I_2\times\cdots\times I_N}$, the mode-$n$ unfolding $\mat{X}_{(n)}$ is obtained by organizing all mode-$n$ fibers of $\mathcal{X}$ as columns, which is defined as follows:  
	\[ \mat{X}_{(n)}(i_n, j) = \mathcal{X}(i_1, i_2, \cdots, i_n, \cdots, i_N), \]
	where $j = 1 + \sum_{\substack{k=1 \\ k \neq n}}^{N} (i_k-1) \prod_{\substack{m=1 \\ m \neq n}}^{k-1} I_m$, and $n=1,2,\cdots,N$. The inverse process of mode-$n$ unfolding is denoted as $\fold_n$, i.e., $\mathcal{X} = \fold_n(\mat{X}_{(n)})$.
\end{definition}

\begin{definition}[Mode-$n$ Product \cite{kolda2009tensor}] \label{df:mode-n product}
	Given an $N$th-order tensor $\mathcal{G} \in \mathbb{R}^{I_1 \times I_2 \times \cdots \times I_N}$ and a matrix $\mat{U} \in \mathbb{R}^{J \times I_n}$, the mode-$n$ product $\mathcal{G} \times_n \mat{U} \in \mathbb{R}^{I_1 \times \cdots \times I_{n-1} \times J \times I_{n+1} \times \cdots \times I_N}$ is defined as
	\[ \mathcal{G} \times_n \mat{U} = \fold_n (\mat{U} \mat{G}_{(n)}), \]
	where $n=1,2,\cdots,N$.
\end{definition}

\begin{definition}[Outer Product \cite{8497054}]\label{def:outer-product}
	The outer product of a matrix $\mathbf{E}\in\mathbb{R}^{n_1\times n_2}$ and a vector $\mathbf{c}\in\mathbb{R}^{n_3}$ is defined element-wise as
	\begin{equation}\nonumber
		(\mathbf E\circ\mathbf c)(i,j,k)
		=
		\mathbf E(i,j)\mathbf c(k).
	\end{equation}
\end{definition}

\begin{definition}[CP Decomposition \cite{kolda2009tensor}]
	For a third-order tensor $\mathcal{X}\in\mathbb{R}^{n_1\times n_2\times n_3}$, the CP decomposition factorizes $\mathcal{X}$ into a sum of component rank-one tensors, i.e.,
	\begin{equation}
		\mathcal{X}
		=
		\sum_{j=1}^{J}
		\mathbf{u}_{j}^{(1)}
		\circ
		\mathbf{u}_{j}^{(2)}
		\circ
		\mathbf{u}_{j}^{(3)},
	\end{equation}
	where
	$\mathbf{u}_{j}^{(i)}\in\mathbb{R}^{n_i}$,
	$i=1,2,3$, are the factor vectors, and
	$J$ denotes the number of rank-one components.
\end{definition}

\begin{definition}[Tucker Decomposition \cite{kolda2009tensor}]
	For a third-order tensor $\mathcal{X}\in\mathbb{R}^{n_1\times n_2\times n_3}$, the Tucker decomposition factorizes $\mathcal{X}$ as
	\begin{equation}
		\mathcal{X}
		=
		\mathcal{G}
		\times_1 \mathbf{U}^{(1)}
		\times_2 \mathbf{U}^{(2)}
		\times_3 \mathbf{U}^{(3)},
	\end{equation}
	where
	$\mathcal{G}\in\mathbb{R}^{r_1\times r_2\times r_3}$
	is the core tensor and
	$\mathbf{U}^{(i)}\in\mathbb{R}^{n_i\times r_i}$,
	$i=1,2,3$, are the factor matrices.
\end{definition}

\begin{definition}[Block Term Decomposition \cite{doi:10.1137/070690729}]
	For a third-order tensor 
	$\mathcal{X}\in\mathbb{R}^{n_1\times n_2\times n_3}$, 
	the block term decomposition factorizes $\mathcal{X}$ into a sum of block terms, i.e.,
	\begin{equation}
		\mathcal{X}
		=
		\sum_{j=1}^{J}
		\left(\mathbf{A}_j\mathbf{B}_j^{\top}\right)
		\circ \mathbf{c}_j,
	\end{equation}
	where 
	$\mathbf{A}_j\in\mathbb{R}^{n_1\times L_j}$,
	$\mathbf{B}_j\in\mathbb{R}^{n_2\times L_j}$, and
	$\mathbf{c}_j\in\mathbb{R}^{n_3}$,
	$j=1,\ldots,J$, are the factor matrices and factor vectors, respectively,
	and $J$ denotes the number of block terms.
\end{definition}

\section{Proposed Method}\label{sec4}
In this section, we first introduce the proposed pre-trained low-rank tensor decomposition framework, i.e., PLTD. Then, we discuss the relationship between PLTD and two representative tensor decomposition paradigms: classical shallow tensor decomposition and untrained deep tensor decomposition. Finally, we give a PLTD-based multi-dimensional image recovery model and establish the recovery error bound between the recovered tensor by our model and the underlying tensor.
\begin{figure*}[t]
	\centering
	\includegraphics[width=\linewidth]{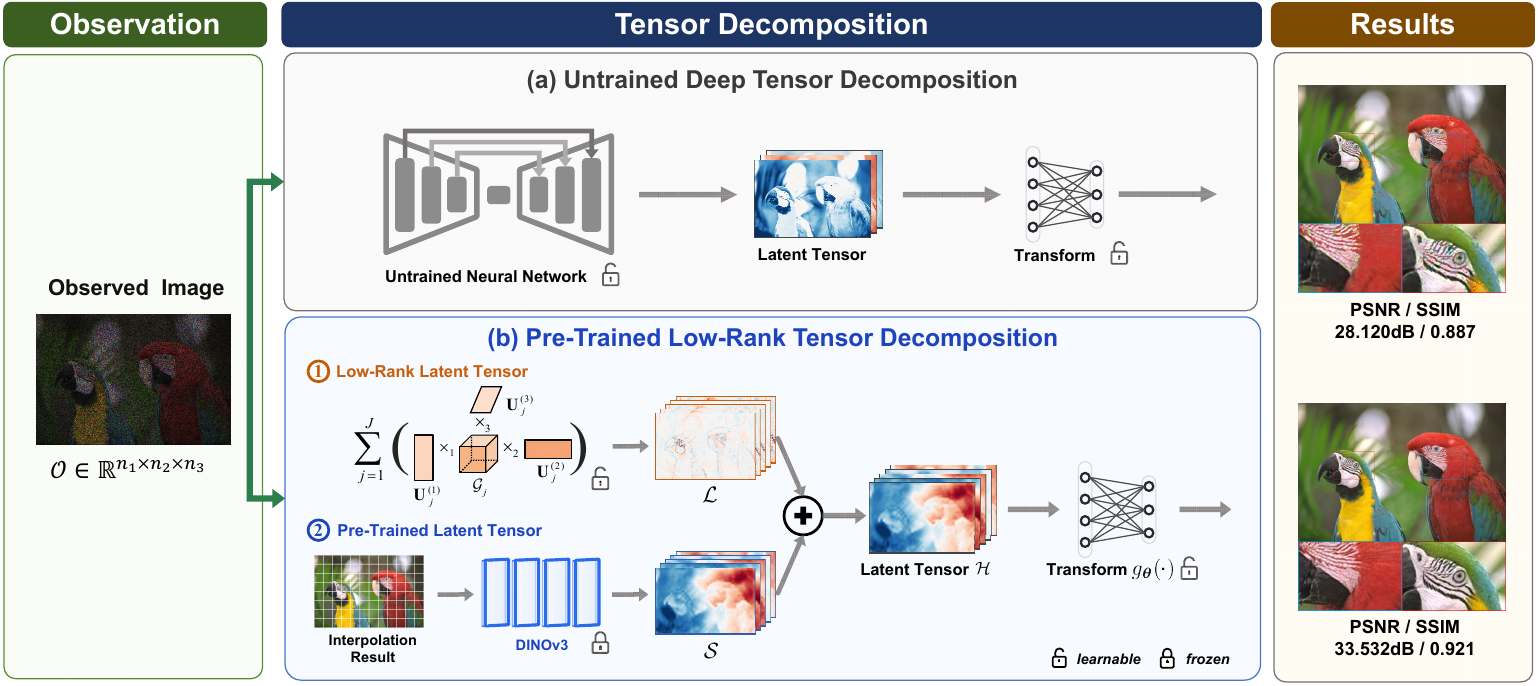}
	\caption{Flowchart of our PLTD framework. (a) Untrained deep tensor decomposition methods learn a heavy neural network from scratch to construct the recovery tensor. (b) PLTD organically integrates the pre-trained latent tensor with learnable low-rank tensor to achieve higher recovery fidelity, fewer learnable parameters, and smaller carbon footprint.}
	\label{fig: flowchart}
\end{figure*}
\subsection{The Proposed PLTD}

Transform-based tensor decomposition leverages a latent tensor and a transform to characterize the instance-specific structure of multi-dimensional images.
Shallow tensor decomposition relies on handcrafted low-rank structures to model the latent tensor and transform, limiting its representation capability.
Untrained deep tensor decomposition improves representation capability with neural networks but learns each target tensor from scratch, resulting in computationally expensive and time-consuming training.
To overcome these limitations, we introduce the common structure inherited from pre-trained large vision model into tensor decomposition, thereby achieving an unprecedented balance among higher recovery fidelity, fewer learnable parameters, and smaller carbon footprint.

Given a multi-dimensional image $\mathcal{X}\in\mathbb{R}^{n_1\times n_2\times n_3}$, PLTD factorizes it into a latent tensor $\mathcal{H}\in\mathbb{R}^{n_1\times n_2\times \hat n_3}$ and a lightweight learnable transform $g_{\boldsymbol\theta}(\cdot)$, where $\boldsymbol\theta$ represents the network parameters. Specifically, the latent tensor $\mathcal{H}$ consists of a pre-trained latent tensor $\mathcal{S}\in\mathbb{R}^{n_1\times n_2\times \hat n_3}$ extracted from a frozen DINOv3 and a learnable low-rank latent tensor $\mathcal{L}\in\mathbb{R}^{n_1\times n_2\times \hat n_3}$.
The pre-trained latent tensor $\mathcal{S}$ captures the common structure across different multi-dimensional images, while the learnable low-rank latent tensor $\mathcal{L}$ characterizes the instance-specific structure of the target tensor.
The flowchart of our PLTD framework is shown in Fig.~\ref{fig: flowchart}, and its mathematical formulation is given as follows:
\begin{equation}
	\mathcal{X}
	=
	g_{\boldsymbol\theta}
	\big(
	\sum_{j=1}^{J} \mathcal{G}_{j} \times_{1} \mathbf{U}_{j}^{(1)} \times_{2} \mathbf{U}_{j}^{(2)} \times_{3} \mathbf{U}_{j}^{(3)}
	+\mathcal{S}
	\big),
	\label{eq:PLTD_main}
\end{equation}
where $\mathcal{G}_{j}\in\mathbb{R}^{r_{1,j}\times r_{2,j}\times r_{3,j}}$, $\mathbf{U}_{j}^{(1)}\in\mathbb{R}^{n_1\times r_{1,j}}$, $\mathbf{U}_{j}^{(2)}\in\mathbb{R}^{n_2\times r_{2,j}}$, and $\mathbf{U}_{j}^{(3)}\in\mathbb{R}^{\hat n_3\times r_{3,j}}$. The summation term in Eq.~\eqref{eq:PLTD_main} provides a general low-rank tensor decomposition framework for $\mathcal L$, with several representative tensor decompositions as special cases:

\textit{(i) Slice-Wise Matrix Decomposition.}
By setting $J=\hat n_3$, $r_{1,j}=r_{2,j}=r_j$, $r_{3,j}=1$,
$\mathcal G_j(:,:,1)=\mathbf I_{r_j}$,
$\mathbf U_j^{(1)}=\mathbf A_j \in\mathbb R^{n_1\times r_j}$,
$\mathbf U_j^{(2)}=\mathbf B_j^\top\in\mathbb R^{n_2\times r_j}$, and
$\mathbf U_j^{(3)}=\mathbf e_j\in\mathbb R^{\hat n_3}$,
where $\mathbf e_j$ denotes the $j$th canonical basis vector,
the low-rank latent tensor $\mathcal L$ becomes
$\sum_{j=1}^{\hat n_3}(\mathbf A_j\mathbf B_j)\circ\mathbf e_j$,
which corresponds to slice-wise matrix decomposition.

\textit{(ii) CP Decomposition.}
By setting $r_{1,j}=r_{2,j}=r_{3,j}=1$ for $j=1,\ldots,J$,
$\mathcal G_j=g_j\in\mathbb{R}$,
$\mathbf U_j^{(1)}=\mathbf u_j^{(1)}\in\mathbb{R}^{n_1}$,
$\mathbf U_j^{(2)}=\mathbf u_j^{(2)}\in\mathbb{R}^{n_2}$, and
$\mathbf U_j^{(3)}=\mathbf u_j^{(3)}\in\mathbb{R}^{\hat n_3}$,
the low-rank latent tensor $\mathcal L$ becomes
$\sum_{j=1}^{J}\mathbf u_j^{(1)}\circ\mathbf u_j^{(2)}\circ\mathbf u_j^{(3)}$,
which corresponds to the CP decomposition.

\textit{(iii) Tucker Decomposition.}
By setting $J=1$, the low-rank latent tensor $\mathcal L$ becomes
$\mathcal{G}\times_{1}\mathbf U^{(1)}
\times_{2}\mathbf U^{(2)}
\times_{3}\mathbf U^{(3)}$,
which is exactly the Tucker decomposition.

\textit{(iv) Block Term Decomposition.}
By setting $J=1$, $r_{1,1}=r_{2,1}=r$, $r_{3,1}=1$,
$\mathcal G_1(:,:,1)=\mathbf I_r$,
$\mathbf U_1^{(1)}=\mathbf A \in\mathbb{R}^{n_1\times r}$,
$\mathbf U_1^{(2)}=\mathbf B^\top \in\mathbb{R}^{n_2\times r}$, and
$\mathbf U_1^{(3)}=\mathbf c \in\mathbb{R}^{\hat n_3}$,
the low-rank latent tensor $\mathcal L$ becomes
$(\mathbf A\mathbf B)\circ\mathbf c$, which corresponds to a single block term in the block term decomposition.

Unless otherwise specified, we adopt this low-rank decomposition in PLTD to characterize the instance-specific structure of the target tensor. The influence of other low-rank structures of the learnable low-rank latent tensor will be further discussed in the Discussion~\ref{dis: extension}.

As shown in Eq.~\eqref{eq:PLTD_main}, PLTD comprises two components: the latent tensor (i.e., $\mathcal{H} = \mathcal{L} + \mathcal{S}$) and a learnable transform (i.e., $g_{\boldsymbol{\theta}} (\cdot)$). The former combines the pre-trained latent tensor with the learnable low-rank latent tensor, while the latter transforms the latent tensor back to the original domain. We now introduce these two components in detail. 
\paragraph{Latent Tensor}
\begin{figure}[htbp]
	\centering
	\setlength{\tabcolsep}{5pt}
	\begin{tabular}{ccc}
		\includegraphics[width=2.6cm]{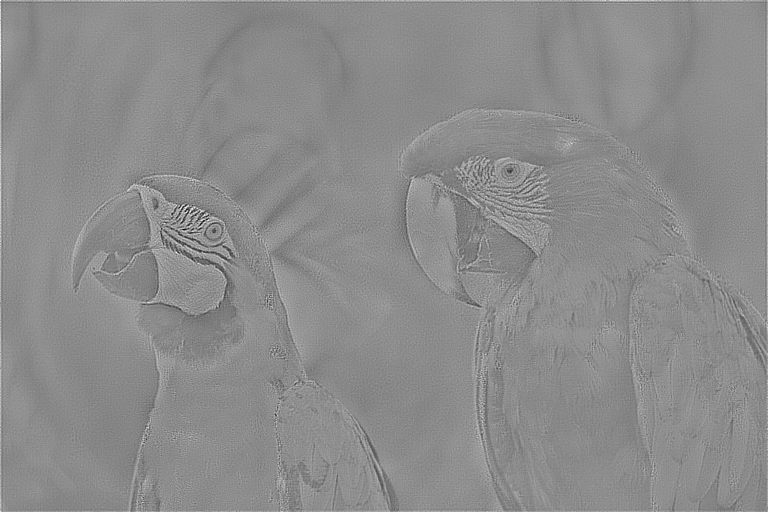} &
		\includegraphics[width=2.6cm]{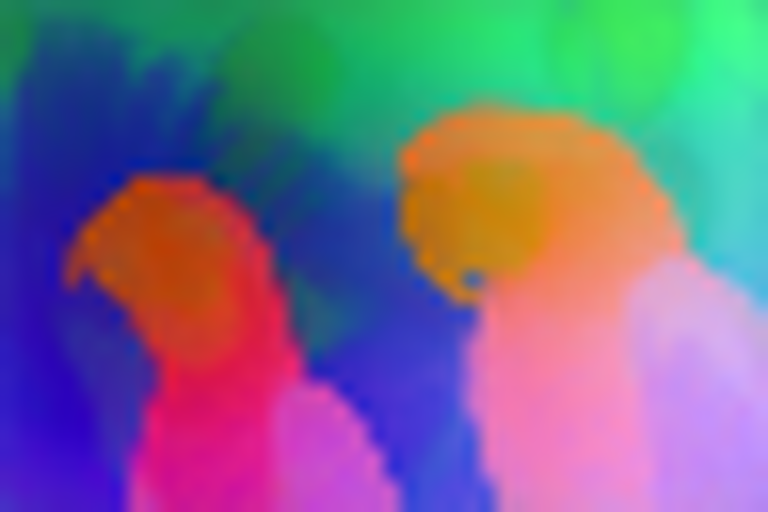} &
		\includegraphics[width=2.6cm]{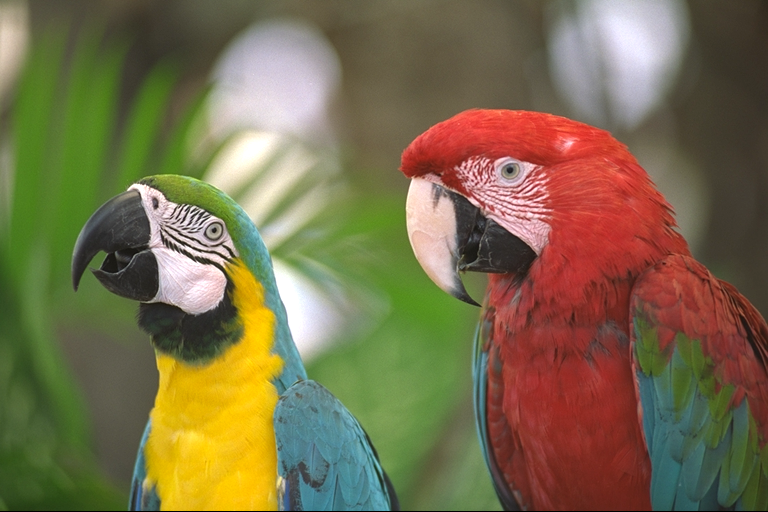} \\
		
		\includegraphics[width=2.6cm]{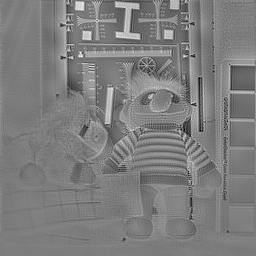} &
		\includegraphics[width=2.6cm]{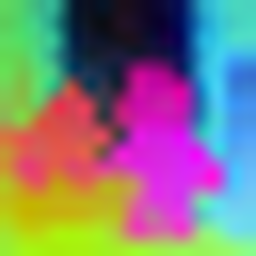} &
		\includegraphics[width=2.6cm]{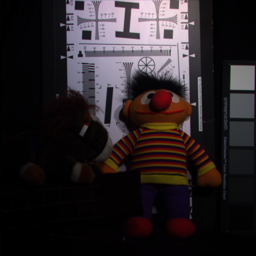} \\
		
		\includegraphics[width=2.6cm]{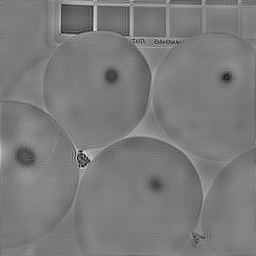} &
		\includegraphics[width=2.6cm]{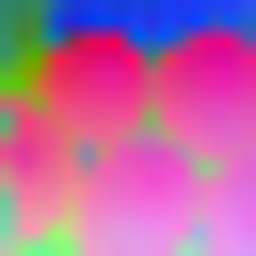} &
		\includegraphics[width=2.6cm]{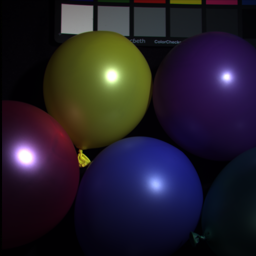} \\
		
		{\scriptsize
			\begin{tabular}[c]{@{}c@{}}
				Low-Rank Latent Tensor\\
				(Instance-Specific Structure)
		\end{tabular}}
		&
		{\scriptsize
			\begin{tabular}[c]{@{}c@{}}
				Pre-Trained Latent Tensor\\
				(Common Structure)
		\end{tabular}}
		&
		{\scriptsize
			\begin{tabular}[c]{@{}c@{}}
				Ground Truth
		\end{tabular}}
		\\
	\end{tabular}
	\caption{Visualization of the learnable low-rank latent tensor and the pre-trained latent tensor. The two latent tensors are complementary and jointly indispensable for latent tensor modeling. The fixed pre-trained latent tensor captures the common structure across different multi-dimensional images, while the learnable low-rank latent tensor characterizes the instance-specific structure of the target tensor.
	}
	\label{fig:fine-tune}
\end{figure}
Unlike shallow tensor decomposition methods, which model the latent tensor independently through handcrafted low-rank structures, and untrained deep tensor decomposition methods, which rely on neural networks to learn the latent tensor from scratch, 
PLTD constructs the latent tensor using two complementary and indispensable components: a pre-trained latent tensor $\mathcal{S}$ and a learnable low-rank latent tensor $\mathcal{L}$.
Given the coarse initialization obtained by interpolating the observation, we construct the pre-trained latent tensor $\mathcal{S}$ by feeding the coarse initialization into a frozen DINOv3 model. 
The pre-trained latent tensor $\mathcal{S}$ inherits rich structural and semantic information from DINOv3, providing a reliable common structure for latent tensor modeling.

Although $\mathcal{S}$ offers a meaningful common structure, it is not inherently tailored to the target tensor. Directly using $\mathcal{S}$ may therefore lead to misalignment with the desired latent tensor. 
To bridge the gap between $\mathcal{S}$ and the desired latent tensor, PLTD introduces the learnable low-rank tensor $\mathcal{L}$ to characterize the instance-specific structure. 
As illustrated in Fig.~\ref{fig:fine-tune}, the pre-trained latent tensor provides the dominant structure of the desired latent tensor, while the low-rank latent tensor supplements instance-specific structure for latent space alignment. Therefore, the fixed pre-trained latent tensor and the learnable low-rank latent tensor play complementary and indispensable roles in characterizing the latent tensor.

\paragraph{Learnable Transform}
To better capture the dependencies among the frontal slices of multi-dimensional image, the learnable transformation $g_{\boldsymbol{\theta}}(\cdot)$ in PLTD is implemented as a lightweight convolutional neural network, formulated as: 
\begin{equation}\nonumber
	g_{\boldsymbol{\theta}}(\mathcal{H})
	=
	\mathcal{W}_{L}\circledast
	\sigma\!\big(
	\mathcal{W}_{L-1}\circledast
	\cdots
	\sigma\!\left(
	\mathcal{W}_{1}\circledast\mathcal{H}
	\right)
	\big),
\end{equation}
where $\circledast$ denotes the convolution operation, $\sigma(\cdot)$ is the nonlinear activation function, and
$\boldsymbol{\theta}=\{\mathcal{W}_{l}\}_{l=1}^{L}$ denotes the learnable parameters of the convolutional neural network (CNN). This learnable transform enables nonlinear modeling of dependencies among frontal slices and flexible mapping from the latent tensor to the recovery tensor with only a small number of learnable parameters.

\emph{Remark 1:}
We further discuss the relationship between PLTD and existing classical tensor decomposition methods. In particular, shallow tensor decomposition and untrained deep tensor decomposition can be viewed as two special cases related to our framework.

(i) If the latent tensor excludes the pre-trained latent tensor $\mathcal{S}$, our PLTD degenerates into the shallow tensor decomposition. In this case, the latent tensor relies primarily on handcrafted low-rank structures, which are insufficient to characterize complex dependencies.

(ii) If the latent tensor excludes the learnable low-rank latent tensor $\mathcal{L}$ and the pre-trained large vision model is unfrozen, our PLTD degenerates into the untrained deep tensor decomposition \cite{11045408}. In this case, the latent tensor is mainly learned from scratch through a neural network, which usually leads to computationally expensive and time-consuming training.

Compared with shallow tensor decomposition, PLTD inherits meaningful common structure from a pre-trained large vision model to enhance the representation capability beyond handcrafted low-rank structures.   
Compared with untrained deep tensor decomposition, PLTD avoids learning heavy neural networks from scratch and instead learns only the lightweight low-rank tensor. 
By organically integrating the pre-trained large vision model into the classical tensor decomposition framework, our PLTD achieves an unprecedented balance among higher recovery fidelity, fewer learnable parameters, and smaller carbon footprint.

\subsection{PLTD-based Multi-Dimensional Image Recovery Model}
In this section, we develop a PLTD-based multi-dimensional image recovery model for real-world recovery tasks. In summary, PLTD consists of two key components: (i) a latent tensor that integrates a pre-trained latent tensor with a learnable low-rank latent tensor, and (ii) a lightweight learnable transform that maps the latent tensor back to the original domain. Specifically, we first introduce the recovery model and then present the solving algorithm. 
\paragraph{Recovery Model}
To evaluate the representational capability of PLTD, we formulate the PLTD-based multi-dimensional image recovery model as
\begin{equation}
	\begin{aligned}
		\min_{\boldsymbol\theta, \mathbf A, \mathbf B, \mathbf c} \quad & \frac{1}{2}\|\mathcal{P}_{\Omega}(\mathcal{X}-\mathcal{O})\|_F^2 \\
		\text{s.t.}\quad & \mathcal{X}=g_{\boldsymbol\theta}
		\big(
		(\mathbf A\mathbf B)\circ\mathbf c
		+\mathcal{S}
		\big),
	\end{aligned}
	\label{eq:model}
\end{equation}
where $\Omega$ is the observed index set, $\mathcal{P}_{\Omega}(\cdot)$ retains the entries in $\Omega$ and sets the others to zero, $\mathcal{O}\in\mathbb{R}^{n_1\times n_2\times n_3}$ is an incomplete observation, $\mathcal{X}\in\mathbb{R}^{n_1\times n_2\times n_3}$ is the recovered tensor. $\mathbf A\in\mathbb{R}^{n_1\times r}$, $\mathbf B\in\mathbb{R}^{r\times n_2}$, $\mathbf c\in\mathbb{R}^{\hat n_3}$. The optimization variables are the network parameters $\boldsymbol{\theta}$, together with the low-rank tensor factors  $\{\mathbf{A},\mathbf{B},\mathbf{c}\}$, while the pre-trained latent tensor $\mathcal{S}$ remains frozen.

\paragraph{Solving Algorithm}
The model in Eq.~\eqref{eq:model} is highly nonconvex and nonlinear, making it challenging to solve. In this work, we adopt the popular adaptive moment estimation (Adam) \cite{kingma2014adam} algorithm to efficiently optimize the network parameters $\boldsymbol\theta$ and the low-rank tensor factors $\{\mathbf{A},\mathbf{B},\mathbf{c}\}$. After optimization, the recovered tensor $\mathcal{X}$ can be obtained by Eq.~\eqref{eq:PLTD_main}. Meanwhile, we set the maximum iteration number $t_{\max}$ as the stopping criterion for the Adam-based optimization algorithm in our experiments. The pseudocode of the Adam-based optimization algorithm is presented in Algorithm~\ref{alg1}.
\begin{algorithm}[htbp]
	\caption{Solving algorithm for the proposed multi-dimensional image recovery model.}
	\label{alg1}
	\begin{algorithmic}[1]
		\STATE \textbf{Input:} The observed image $\mathcal{O}$, the observed index $\Omega$, and the factorization rank $r$ in low-rank tensor factors $\{\mathbf{A},\mathbf{B},\mathbf{c}\}$.
		\STATE \textbf{Initialization:} Initial low-rank tensor factors $\{\mathbf{A},\mathbf{B},\mathbf{c}\}$, CNN weights $\boldsymbol\theta$.
		\STATE Obtain the pre-trained latent tensor $\mathcal{S}$ from DINOv3.
		\FOR{$t = 1,2,\ldots,t_{\max}$}
		\STATE Compute the recovered image $\mathcal{X}$ (Eq.~\eqref{eq:PLTD_main});
		\STATE Compute the loss function of the model (Eq.~\eqref{eq:model});
		\STATE Update $\{ \mathbf{A},\mathbf{B},\mathbf{c}, \boldsymbol\theta \}$  using the Adam optimizer;
		\ENDFOR
		\STATE \textbf{Output:} The recovery result $\mathcal{X}$.
	\end{algorithmic}
\end{algorithm}

\subsection{Theoretical Error Bound}
To theoretically characterize the recovery performance of the PLTD-based recovery model, we establish an upper bound on the recovery error. 
Furthermore, our analysis reveals that introducing pre-trained common structure can lead to a tighter recovery error bound than that of untrained deep tensor decomposition under general conditions.

\begin{theorem}[Recovery Error Bound \cite{Vershynin_2018, fan2021multi}]
	\label{thm}
	
	Let $\hat{\mathcal{X}}\in\mathbb{R}^{n_1\times n_2\times n_3}$ be the underlying tensor. For any fixed pre-trained latent tensor
	$\mathcal S\in
	\mathbb{R}^{n_1\times n_2\times \hat n_3}$,
	let $\mathcal{X}$ be recovered from entries indexed by $\Omega$. Consider the model class
	\[
	\mathfrak{L}_{pre}
	=
	\left\{
	\mathcal{X}:
	\mathcal{X}
	=
	g_{\boldsymbol\theta}
	\big(
	(\mathbf A\mathbf B)\circ\mathbf c
	+\mathcal{S}
	\big)
	\right\},
	\]
	where $\mathbf{A}$, $\mathbf{B}$, and $\mathbf{c}$ are low-rank tensor factors,
	$\circ$ represents the outer product, and
	$g_{\boldsymbol\theta}$ is an $L$-layer convolutional neural network.
	Let
	$
	\mathcal Z_0=(\mathbf A\mathbf B)\circ\mathbf c+\mathcal S
	$
	and define
	$
		\mathcal Z_l
		=
		\sigma(\mathcal W_l\circledast\mathcal Z_{l-1}),
		\quad l=1,\ldots,L-1,
	$
	with
	$
	\mathcal X=\mathcal W_L\circledast\mathcal Z_{L-1}.
	$
	Here,
	$\mathcal W_l\in\mathbb R^{k\times k\times c_{l-1}\times c_l}$
	for $l=1,\ldots,L$, $k\in\mathbb N$ denotes the spatial kernel size,
	and $c_l$ denotes the number of channels at the $l$th layer, with
	$c_0=\hat n_3$ and $c_L=n_3$.
	
	Assume that
	$\mathbf A\in\mathbb R^{n_1\times r}$, $\mathbf B\in\mathbb R^{r\times n_2}$, $\mathbf c\in\mathbb R^{\hat n_3}$, 
	$
	\|\mathbf A\|_F\leq\alpha_A$, 
	$\|\mathbf B\|_F\leq\alpha_B$, 
	$\|\mathbf c\|_2\leq\alpha_c
	$.
	For positive constants $\{\beta_l\}_{l=1}^{L}$, the convolution kernels satisfy
	$
	\|\mathcal W_l\|_F\leq \frac{\beta_l}{k}
	$, $l=1,\ldots,L$. The activation function is $\eta$-Lipschitz continuous for some $\eta>0$, and the
	intermediate features satisfy
	$
	\|\mathcal Z_l\|_F\leq M_l,
	\quad l=0,\ldots,L-1,
	$
	for positive constants $\{M_l\}_{l=0}^{L-1}$. 
	Moreover, suppose that $\Omega$ is randomly sampled and
	$
	\|\hat{\mathcal X}\|_{\infty}\leq\delta
	$, $
	\sup_{\mathcal X\in\mathfrak{L}_{pre}}
	\|\mathcal X\|_{\infty}\leq\delta
	$.
	Then there exists a constant $\epsilon_0>0$ such that, for any $0<\epsilon\leq\epsilon_0$, with probability at least
	$1-2N^{-1}$,
	\begin{equation}
	\begin{aligned}
		\frac{\|\hat{\mathcal X}-\mathcal X\|_F}{\sqrt N}
		&\leq
		\frac{
			\|\mathcal{P}_\Omega
			(\hat{\mathcal X}-\mathcal X)\|_F
		}
		{\sqrt{|\Omega|}}
		+
		\frac{2\epsilon}{\sqrt{|\Omega|}}
		\\
		&+
		\left(
		\frac{
			8\delta^4
			\left(
			\mathcal C
			\log
			\left(
			\frac{C_0}{\epsilon}
			\right)
			+
			\log N
			\right)
		}
		{|\Omega|}
		\right)^{1/4},
	\end{aligned}
	\end{equation}
	where
	$
	\mathcal C
	=
	r(n_1+n_2)+\hat n_3+H_g
	$, 
	and
	$
	H_g
	=
	\sum_{l=1}^{L}
	c_lc_{l-1}k^2
	$
	denotes the number of learnable parameters of the convolutional network. Here $N=n_1n_2n_3$ and $C_0>0$ is a sufficiently large constant independent of $\epsilon$.
	
	Theorem~\ref{thm} characterizes the recovery performance of the PLTD-based recovery model. In particular, the recovery error bound depends on the effective model complexity $r(n_1+n_2)+\hat n_3+H_g$. Here, $r(n_1+n_2)+\hat n_3$ denotes the complexity of the latent tensor, while $H_g$ corresponds to the complexity of the learnable transform. Since PLTD employs a lightweight convolutional network for the transform, $H_g$ remains relatively small. Moreover, compared with untrained deep tensor decomposition methods that directly learn complex network parameters to model the latent tensor, PLTD only learns a low-rank tensor with $r(n_1+n_2)+\hat n_3$ parameters to capture the instance-specific structure of the target tensor. Therefore, the effective model complexity of PLTD is generally much smaller than that of most untrained deep tensor decomposition methods, leading to a tighter recovery error bound. The detailed proof is provided in Appendix~\ref{appendix}.
\end{theorem}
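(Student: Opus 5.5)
The bound follows the standard route for parametric recovery models (as in the deep Tucker analysis of \cite{fan2021multi}): a uniform concentration inequality over an $\epsilon$-net of the model class $\mathfrak{L}_{pre}$, combined with a Lipschitz estimate that controls how far any model is from its nearest net point. The main tool is Hoeffding's inequality with a union bound \cite{Vershynin_2018}.

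\textbf{Step 1 (uniform concentration over a finite set).} Fix a finite set $\mathfrak{N}\subset\mathfrak{L}_{pre}$. For each $\mathcal X\in\mathfrak N$, the quantity $\|\mathcal P_\Omega(\hat{\mathcal X}-\mathcal X)\|_F^2$ is a sum over $|\Omega|$ uniformly sampled entries. Each summand lies in $[0,4\delta^2]$, because both tensors have sup-norm at most $\delta$. Its mean is $\frac{|\Omega|}{N}\|\hat{\mathcal X}-\mathcal X\|_F^2$. Hoeffding's inequality and a union bound over $\mathfrak N$, with the deviation chosen so the failure probability is $2N^{-1}$, give
\[
\Bigl|\tfrac{1}{N}\|\hat{\mathcal X}-\mathcal X\|_F^2-\tfrac{1}{|\Omega|}\|\mathcal P_\Omega(\hat{\mathcal X}-\mathcal X)\|_F^2\Bigr|\le\sqrt{\tfrac{8\delta^4(\log|\mathfrak N|+\log N)}{|\Omega|}}
\]
for all $\mathcal X\in\mathfrak N$ simultaneously. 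Using $\sqrt{a}\le\sqrt{b}+\sqrt{|a-b|}$, this becomes
\[
\frac{\|\hat{\mathcal X}-\mathcal X\|_F}{\sqrt N}\le\frac{\|\mathcal P_\Omega(\hat{\mathcal X}-\mathcal X)\|_F}{\sqrt{|\Omega|}}+\Bigl(\tfrac{8\delta^4(\log|\mathfrak N|+\log N)}{|\Omega|}\Bigr)^{1/4}.
\]

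\textbf{Step 2 (Lipschitz control of the parametrization).} This is the main technical step. I will show that the map
\[
(\mathbf A,\mathbf B,\mathbf c,\mathcal W_1,\ldots,\mathcal W_L)\mapsto\mathcal X
\]
is Lipschitz in the Frobenius norm on the constrained parameter set. Since $\mathcal S$ is fixed, it cancels in $\mathcal Z_0-\mathcal Z_0'$, so the bounds never depend on $\mathcal S$. For the first layer,
\[
\|(\mathbf A\mathbf B)\circ\mathbf c-(\mathbf A'\mathbf B')\circ\mathbf c'\|_F\le\alpha_B\alpha_c\|\mathbf A-\mathbf A'\|_F+\alpha_A\alpha_c\|\mathbf B-\mathbf B'\|_F+\alpha_A\alpha_B\|\mathbf c-\mathbf c'\|_2 .
\]
For each subsequent layer, use the convolution estimate $\|\mathcal W\circledast\mathcal Z\|_F\le k\|\mathcal W\|_F\|\mathcal Z\|_F$; this is exactly where the normalization $\|\mathcal W_l\|_F\le\beta_l/k$ is used. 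Together with the $\eta$-Lipschitz activation, this gives
\[
\|\mathcal Z_l-\mathcal Z_l'\|_F\le\eta\bigl(\beta_l\|\mathcal Z_{l-1}-\mathcal Z_{l-1}'\|_F+kM_{l-1}\|\mathcal W_l-\mathcal W_l'\|_F\bigr).
\]
Unrolling this recursion by telescoping over the layers yields $\|\mathcal X-\mathcal X'\|_F\le K\,\|\Theta-\Theta'\|_2$, where $\Theta$ stacks all parameters and $K$ depends only on $\alpha_A,\alpha_B,\alpha_c,\beta_l,M_l,\eta,k,L$. I expect the care to lie in checking that the convolution inequality holds in the form needed, including the boundary/padding convention.

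\textbf{Step 3 (covering and assembling).} The parameter set is a product of Frobenius balls in dimension
\[
\mathcal C=r(n_1+n_2)+\hat n_3+H_g .
\]
A volumetric argument gives a $\frac{\epsilon}{K}$-net of each ball of size at most $(3R K/\epsilon)^{\dim}$. By Step 2, the product net maps to an $\epsilon$-net $\mathfrak N$ of $\mathfrak{L}_{pre}$ in Frobenius norm, with $\log|\mathfrak N|\le\mathcal C\log(C_0/\epsilon)$ for a suitable $C_0$ and all $\epsilon\le\epsilon_0$. Now take any $\mathcal X\in\mathfrak{L}_{pre}$ and its net point $\mathcal X'$ with $\|\mathcal X-\mathcal X'\|_F\le\epsilon$. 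Apply Step 1 to $\mathcal X'$. Then return to $\mathcal X$ using $\|\mathcal P_\Omega(\cdot)\|_F\le\|\cdot\|_F$ and $\frac{1}{\sqrt N}\le\frac{1}{\sqrt{|\Omega|}}$. These two replacements each cost at most $\epsilon/\sqrt{|\Omega|}$, which gives the $2\epsilon/\sqrt{|\Omega|}$ term and completes the stated bound.
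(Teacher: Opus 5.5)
Your proposal is correct and follows the paper's route: nets on the parameter balls, layer-by-layer perturbation bounds via $\|\mathcal W\circledast\mathcal V\|_F\le k\|\mathcal W\|_F\|\mathcal V\|_F$ giving $\log\mathcal N(\mathfrak L_{pre},\|\cdot\|_F,\epsilon)\le\mathcal C\log(C_0/\epsilon)$, and then a uniform deviation bound. The one difference is that you derive that deviation bound yourself (Hoeffding, a union bound, then passing from the net to an arbitrary point), whereas the paper imports it as a lemma from \cite{fan2021multi}; there is also a small constant slip, since with a Lipschitz bound in the stacked norm $\|\Theta-\Theta'\|_2$, per-block $\epsilon/K$ nets only give a $\sqrt{L+3}\,\epsilon$-net, so shrink the radii by $\sqrt{L+3}$ (or, as the paper does, use the per-block sum from your unrolled recursion), which merely changes $C_0$.
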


\section{Experiments}\label{sec5}
In this section, we evaluate the effectiveness and generalization capability of the proposed PLTD framework under four challenging multi-dimensional images: color images \footnote{\url{https://sipi.usc.edu/database/}}, multispectral images\footnote{\url{https://cave.cs.columbia.edu/repository/Multispectral}}, real-world remote sensing images\footnote{\url{https://earthexplorer.usgs.gov/}}, and magnetic resonance images\footnote{\url{https://brainweb.bic.mni.mcgill.ca/brainweb/selection_normal.html}}. We compare PLTD with representative state-of-the-art shallow tensor decomposition methods and untrained deep tensor decomposition methods. The shallow tensor decomposition methods include TNN~\cite{6909886} and MTTD~\cite{feng2023multiplex}, while the untrained deep tensor decomposition methods include HLRTF~\cite{luo2022hlrtf}, S2NTNN~\cite{9780890}, DTR~\cite{11045408}, and DELTA~\cite{11231348}. For a fair comparison, the hyperparameter settings of all competing methods follow those reported in their original papers or official implementations as closely as possible. All experiments were conducted on a desktop computer equipped with an Intel(R) Core(TM) Ultra 5 250K Plus CPU, an NVIDIA GeForce RTX 5070 Ti GPU, and 32 GB of RAM. For a fair comparison, our method and other Python-based methods were implemented in PyTorch 2.11.0 with CUDA 12.8. The MATLAB-based methods were run using MATLAB R2026a.
\paragraph*{Experimental Settings}
\begin{table*}[htbp]
	\caption{Quantitative metrics of the different methods on color images under the different sampling conditions.}
	\label{tab:rgbm}
	\centering
	\footnotesize
	\setlength{\tabcolsep}{1.2pt}
	\renewcommand{\arraystretch}{1.08}
	
	\begin{tabular*}{\linewidth}{@{\extracolsep{\fill}} c c cc cc cc cc cc cc cc}
		
		\toprule
		\multirow{2}{*}{Data} & \multirow{2}{*}{SR}
		& \multicolumn{2}{c}{TNN}
		& \multicolumn{2}{c}{HLRTF}
		& \multicolumn{2}{c}{S2NTNN}
		& \multicolumn{2}{c}{MTTD}
		& \multicolumn{2}{c}{DTR}
		& \multicolumn{2}{c}{DELTA}
		& \multicolumn{2}{c}{PLTD} \\
		\cmidrule(lr){3-4}\cmidrule(lr){5-6}\cmidrule(lr){7-8}
		\cmidrule(lr){9-10}\cmidrule(lr){11-12}\cmidrule(lr){13-14}
		\cmidrule(lr){15-16}
		& & PSNR & SSIM & PSNR & SSIM & PSNR & SSIM & PSNR & SSIM & PSNR & SSIM & PSNR & SSIM & PSNR & SSIM \\
		\midrule
		
		\multirow{4}{*}{\shortstack{\emph{Butterfly}\\(256$\times$256$\times$3)}}
		& $2\%$ & 11.587 & 0.105 & 11.515 & 0.107 & 12.848 & 0.174 & 13.843 & 0.311 & 11.575 & 0.160 & \underline{14.504} & \underline{0.340} & \textbf{15.357} & \textbf{0.387} \\
		& $5\%$ & 13.375 & 0.247 & 12.964 & 0.198 & 14.425 & 0.318 & \underline{16.245} & \underline{0.464} & 15.135 & 0.424 & 16.160 & 0.458 & \textbf{17.943} & \textbf{0.543} \\
		& $10\%$ & 15.197 & 0.381 & 14.852 & 0.339 & 16.270 & 0.466 & 18.674 & 0.603 & 18.482 & 0.583 & \underline{19.625} & \underline{0.637} & \textbf{22.282} & \textbf{0.774} \\
		& $15\%$ & 16.614 & 0.450 & 16.627 & 0.445 & 17.516 & 0.520 & \underline{20.591} & 0.694 & 20.550 & \underline{0.704} & 20.560 & 0.666 & \textbf{25.256} & \textbf{0.855} \\
		\midrule
		
		\multirow{4}{*}{\shortstack{\emph{Airplane}\\(256$\times$256$\times$3)}}
		& $2\%$ & 17.354 & 0.340 & 14.978 & 0.164 & 17.428 & 0.340 & \underline{18.142} & \underline{0.514} & 17.321 & 0.389 & 18.110 & 0.425 & \textbf{18.853} & \textbf{0.515} \\
		& $5\%$ & 19.383 & 0.497 & 17.150 & 0.295 & 19.364 & 0.512 & \underline{20.138} & \underline{0.594} & 19.306 & \underline{0.594} & 19.795 & 0.504 & \textbf{21.156} & \textbf{0.659} \\
		& $10\%$ & 20.852 & 0.588 & 19.376 & 0.468 & 20.904 & 0.609 & 22.054 & 0.688 & 21.249 & \underline{0.730} & \underline{22.322} & 0.656 & \textbf{23.475} & \textbf{0.787} \\
		& $15\%$ & 22.168 & 0.641 & 21.294 & 0.577 & 22.120 & 0.675 & \underline{23.645} & \underline{0.756} & 23.184 & 0.733 & 23.390 & 0.722 & \textbf{25.637} & \textbf{0.848} \\
		\midrule
		
		\multirow{4}{*}{\shortstack{\emph{Peppers}\\(256$\times$256$\times$3)}}
		& $2\%$ & 12.135 & 0.110 & 12.429 & 0.105 & 13.673 & 0.145 & \underline{17.185} & \textbf{0.450} & 12.267 & 0.219 & 16.930 & 0.314 & \textbf{17.405} & \underline{0.372} \\
		& $5\%$ & 13.720 & 0.158 & 14.581 & 0.166 & 14.446 & 0.208 & \underline{19.667} & \underline{0.547} & 16.132 & 0.393 & 19.152 & 0.440 & \textbf{20.548} & \textbf{0.566} \\
		& $10\%$ & 15.951 & 0.220 & 17.273 & 0.300 & 17.111 & 0.317 & 21.458 & 0.636 & 20.342 & 0.526 & \underline{21.977} & \underline{0.644} & \textbf{23.240} & \textbf{0.704} \\
		& $15\%$ & 17.855 & 0.312 & 18.829 & 0.402 & 18.949 & 0.432 & 22.825 & 0.695 & \underline{22.951} & \underline{0.712} & 22.572 & 0.658 & \textbf{24.870} & \textbf{0.774} \\
		\midrule
		
		\multirow{4}{*}{\shortstack{\emph{House}\\(256$\times$256$\times$3)}}
		& $2\%$ & 15.046 & 0.181 & 13.509 & 0.118 & 15.455 & 0.208 & 18.997 & \textbf{0.541} & 17.555 & \underline{0.537} & \underline{19.591} & 0.424 & \textbf{20.099} & 0.489 \\
		& $5\%$ & 16.881 & 0.271 & 16.914 & 0.229 & 18.122 & 0.295 & 21.878 & \textbf{0.622} & 19.633 & 0.513 & \underline{22.688} & 0.559 & \textbf{22.689} & \underline{0.579} \\
		& $10\%$ & 18.825 & 0.346 & 20.062 & 0.412 & 20.483 & 0.481 & \underline{24.512} & \underline{0.705} & 23.053 & 0.603 & 24.435 & 0.646 & \textbf{25.318} & \textbf{0.718} \\
		& $15\%$ & 20.832 & 0.447 & 22.175 & 0.541 & 22.322 & 0.586 & \underline{26.428} & \underline{0.761} & 25.605 & 0.758 & 25.561 & 0.700 & \textbf{27.493} & \textbf{0.786} \\
		\bottomrule
	\end{tabular*}
	
	\vspace{2mm}
	The \textbf{best} and \underline{second-best} results are highlighted.
\end{table*}
\begin{figure*}[htbp]
	\footnotesize
	\begin{tabular*}{\linewidth}{CCCCCCCCC@{}}
		Observation & TNN & HLRTF & S2NTNN & MTTD & DTR & DELTA  & PLTD & Ground Truth \\
		\includegraphics[width=0.105\linewidth]{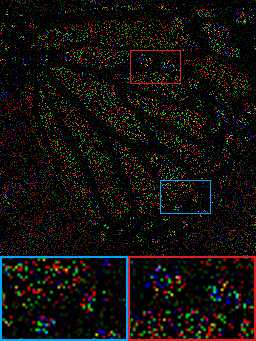} &
		\includegraphics[width=0.105\linewidth]{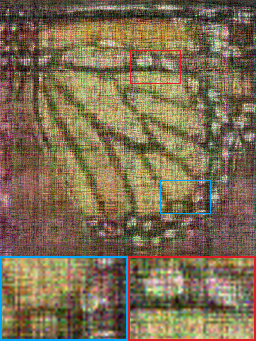} &
		\includegraphics[width=0.105\linewidth]{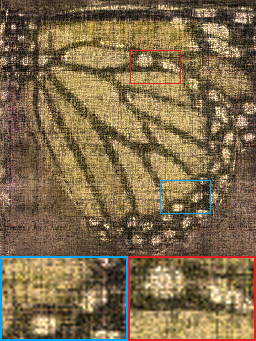} &
		\includegraphics[width=0.105\linewidth]{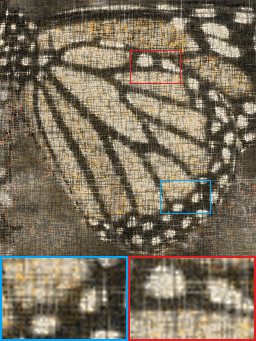} &
		\includegraphics[width=0.105\linewidth]{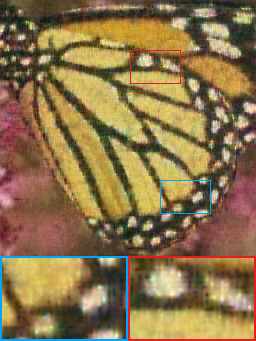} &
		\includegraphics[width=0.105\linewidth]{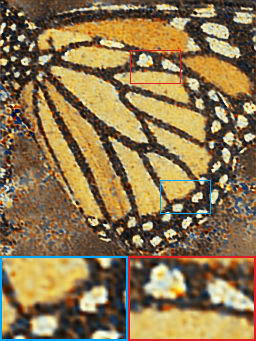} &
		\includegraphics[width=0.105\linewidth]{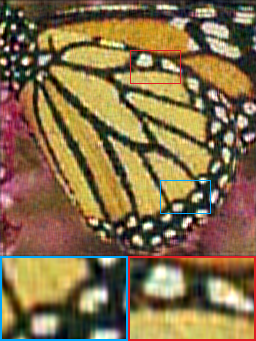} &
		\includegraphics[width=0.105\linewidth]{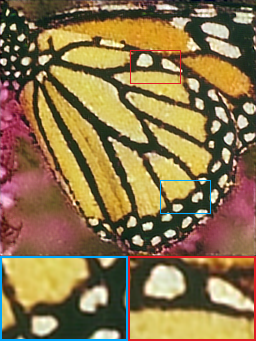} &
		\includegraphics[width=0.105\linewidth]{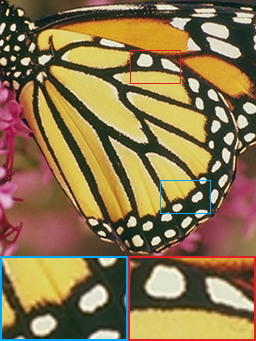} \\
		
		\includegraphics[width=0.105\linewidth]{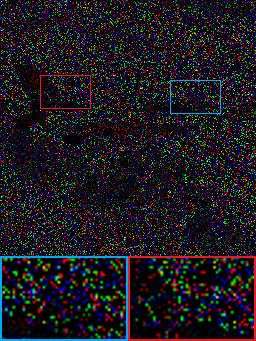} &
		\includegraphics[width=0.105\linewidth]{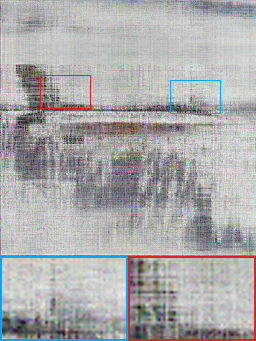} &
		\includegraphics[width=0.105\linewidth]{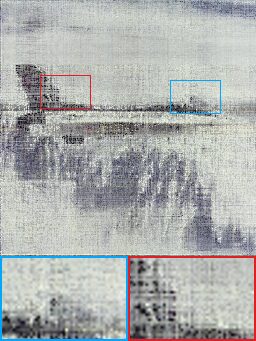} &
		\includegraphics[width=0.105\linewidth]{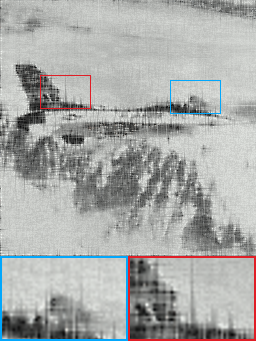} &
		\includegraphics[width=0.105\linewidth]{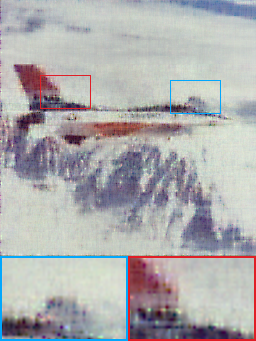} &
		\includegraphics[width=0.105\linewidth]{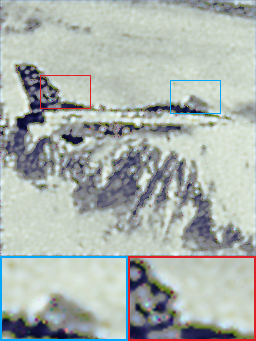} &
		\includegraphics[width=0.105\linewidth]{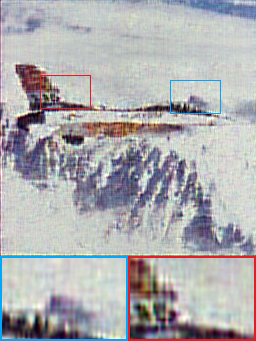} &
		\includegraphics[width=0.105\linewidth]{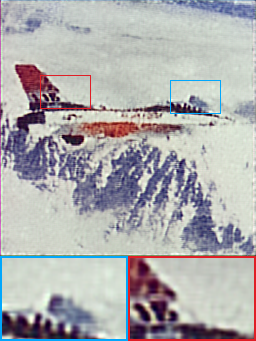} &
		\includegraphics[width=0.105\linewidth]{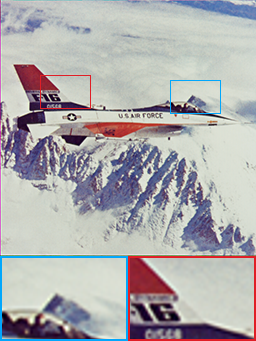} \\
		
		\includegraphics[width=0.105\linewidth]{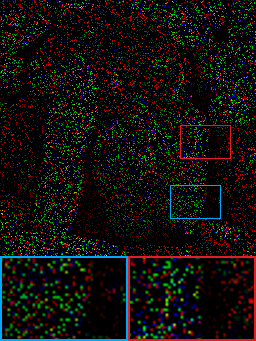} &
		\includegraphics[width=0.105\linewidth]{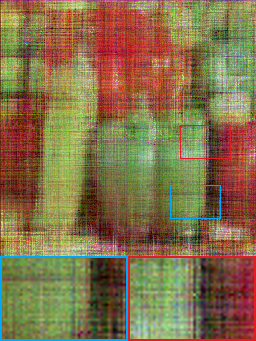} &
		\includegraphics[width=0.105\linewidth]{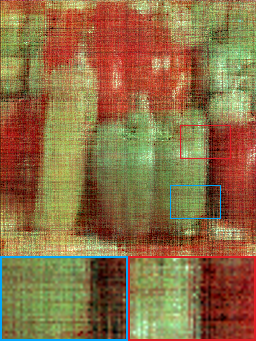} &
		\includegraphics[width=0.105\linewidth]{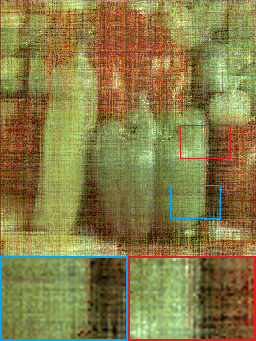} &
		\includegraphics[width=0.105\linewidth]{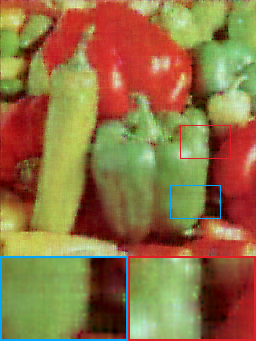} &
		\includegraphics[width=0.105\linewidth]{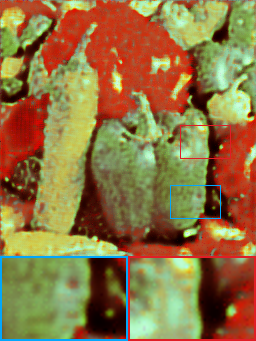} &
		\includegraphics[width=0.105\linewidth]{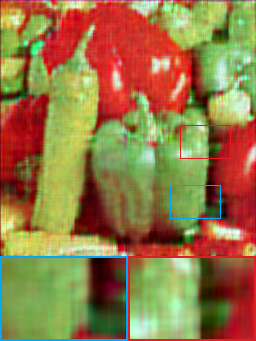} &
		\includegraphics[width=0.105\linewidth]{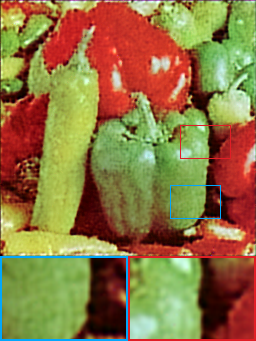} &
		\includegraphics[width=0.105\linewidth]{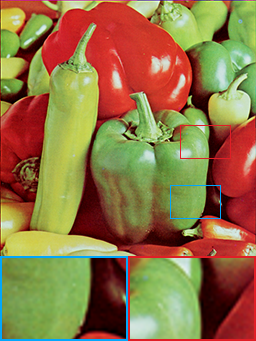} \\
	\end{tabular*}
	\caption{Results of multi-dimensional image inpainting by different methods on color images \emph{Butterfly}, \emph{Airplane}, and \emph{Peppers} with SR = $10\%$.}
	\label{fig:rgb}
\end{figure*}
In our method, the factorization rank $r$ is set to $\min(n_1,n_2)$ across all datasets and tasks. The learnable transform $g_{\boldsymbol{\theta}}(\cdot)$ is implemented using three $3\times3$ convolutional layers with LeakyReLU activations. All learnable parameters are optimized using Adam with a learning rate of $8\times10^{-4}$ and a weight decay of $10^{-8}$. The maximum number of iterations is set to $t_{\max}=4500$ for multispectral image recovery and $t_{\max}=1500$ for color image, remote sensing image, and magnetic resonance image recovery. In addition, we employ a pre-trained DINOv3 model\footnote{\url{https://github.com/facebookresearch/dinov3}} to provide meaningful pre-trained knowledge for multi-dimensional image recovery. To quantitatively assess the recovery quality, we adopt two widely used image quality metrics, namely, peak signal-to-noise ratio (PSNR) and structural similarity (SSIM) \cite{wang2004image}, which measure pixel-wise fidelity and structural similarity, respectively. In general, better recovery performance is indicated by higher PSNR and SSIM values.
\subsection{Color Image Recovery}
To illustrate the effectiveness of our method for tensor recovery, we conduct experiments on color images. The tested color images are \emph{Butterfly}, \emph{Airplane}, \emph{Peppers}, and \emph{House}. All color images are of size $256 \times 256 \times 3$. The random sampling strategy is employed to set the sampling rates (SRs) at $2\%$, $5\%$, $10\%$, and $15\%$ for all test data.

Table~\ref{tab:rgbm} reports the quantitative results for color image recovery under random missing observations. PLTD achieves the highest PSNR and SSIM values at most SRs across all test images. Compared with the competitive methods (i.e., MTTD and DELTA), PLTD shows clear advantages across different test images. For instance, PLTD improves the PSNR from 20.591 dB to 25.256 dB on \emph{Butterfly} at $15\%$ SR. Similar gains can also be found on \emph{Airplane}, \emph{Peppers}, and \emph{House}. 

The visual comparison in Fig.~\ref{fig:rgb} further supports the above quantitative results. In particular, TNN, HLRTF, and S2NTNN exhibit limited capability in recovering the overall image structure, whereas MTTD, DTR, and DELTA reconstruct the dominant content but remain limited in preserving fine-grained details. In contrast, PLTD generates visually cleaner and more faithful results, with better color consistency and richer local textures. The zoomed-in regions further demonstrate the superiority of PLTD in accurate boundary delineation and high-frequency detail preservation. These visual results demonstrate that the organic integration of pre-trained knowledge with the learnable low-rank tensor effectively enhances tensor modeling, enabling PLTD to achieve more accurate and visually faithful color image recovery.
\begin{table*}[htbp]
	\caption{Quantitative metrics of the different methods on MSIs under the different sampling conditions.}
	\label{tab:msim}
	\centering
	\footnotesize
	\setlength{\tabcolsep}{1.2pt}
	\renewcommand{\arraystretch}{1.08}
	
	\begin{tabular*}{\linewidth}{@{\extracolsep{\fill}} c c cc cc cc cc cc cc cc}
		\toprule
		\multirow{2}{*}{Data} & \multirow{2}{*}{SR}
		& \multicolumn{2}{c}{TNN}
		& \multicolumn{2}{c}{HLRTF}
		& \multicolumn{2}{c}{S2NTNN}
		& \multicolumn{2}{c}{MTTD}
		& \multicolumn{2}{c}{DTR}
		& \multicolumn{2}{c}{DELTA}
		& \multicolumn{2}{c}{PLTD} \\
		\cmidrule(lr){3-4}\cmidrule(lr){5-6}\cmidrule(lr){7-8}
		\cmidrule(lr){9-10}\cmidrule(lr){11-12}\cmidrule(lr){13-14}
		\cmidrule(lr){15-16}
		& & PSNR & SSIM & PSNR & SSIM & PSNR & SSIM & PSNR & SSIM & PSNR & SSIM & PSNR & SSIM & PSNR & SSIM \\
		\midrule
		
		\multirow{4}{*}{\shortstack{\emph{Balloons}\\(256$\times$256$\times$31)}}
		& $1\%$ & 25.924 & 0.746 & 24.171 & 0.556 & 27.186 & 0.786 & 22.176 & 0.781 & 25.226 & 0.635 & \underline{33.994} & \underline{0.935} & \textbf{35.224} & \textbf{0.967} \\
		& $2\%$ & 29.486 & 0.860 & 30.083 & 0.797 & 31.459 & 0.887 & 28.506 & 0.889 & 34.696 & 0.933 & \underline{37.076} & \underline{0.968} & \textbf{39.145} & \textbf{0.983} \\
		& $5\%$ & 33.421 & 0.923 & 37.003 & 0.944 & 36.971 & 0.958 & 33.145 & 0.945 & 41.255 & 0.985 & \underline{42.073} & \underline{0.988} & \textbf{44.809} & \textbf{0.995} \\
		& $10\%$ & 36.813 & 0.954 & 42.662 & 0.983 & 42.493 & 0.985 & 36.634 & 0.970 & \underline{45.224} & \underline{0.993} & 45.004 & 0.992 & \textbf{48.767} & \textbf{0.997} \\
		\midrule
		
		\multirow{4}{*}{\shortstack{\emph{Feathers}\\(256$\times$256$\times$31)}}
		& $1\%$ & 22.990 & 0.641 & 21.534 & 0.462 & 24.127 & 0.663 & 20.254 & 0.577 & 23.738 & 0.674 & \underline{27.669} & \underline{0.753} & \textbf{29.064} & \textbf{0.881} \\
		& $2\%$ & 25.017 & 0.743 & 24.585 & 0.605 & 27.263 & 0.823 & 23.013 & 0.698 & 27.770 & 0.824 & \underline{29.901} & \underline{0.864} & \textbf{32.006} & \textbf{0.938} \\
		& $5\%$ & 27.941 & 0.832 & 32.261 & 0.893 & 32.097 & 0.921 & 26.169 & 0.819 & \underline{35.515} & \underline{0.957} & 35.102 & 0.951 & \textbf{38.205} & \textbf{0.980} \\
		& $10\%$ & 30.871 & 0.885 & 38.102 & 0.964 & 38.102 & 0.977 & 29.023 & 0.890 & 40.343 & 0.983 & \underline{41.655} & \underline{0.984} & \textbf{43.166} & \textbf{0.991} \\
		\midrule
		
		\multirow{4}{*}{\shortstack{\emph{Flowers}\\(256$\times$256$\times$31)}}
		& $1\%$ & 26.181 & 0.689 & 25.317 & 0.616 & 27.786 & 0.745 & 22.921 & 0.634 & 28.003 & 0.705 & \underline{31.044} & \underline{0.832} & \textbf{32.804} & \textbf{0.908} \\
		& $2\%$ & 29.150 & 0.797 & 28.476 & 0.736 & 30.873 & 0.829 & 26.599 & 0.748 & 31.467 & 0.856 & \underline{34.029} & \underline{0.907} & \textbf{36.083} & \textbf{0.948} \\
		& $5\%$ & 31.990 & 0.869 & 35.232 & 0.913 & 36.094 & 0.932 & 29.848 & 0.833 & 38.246 & 0.945 & \underline{38.701} & \underline{0.953} & \textbf{40.684} & \textbf{0.979} \\
		& $10\%$ & 34.842 & 0.914 & 40.530 & 0.968 & 41.628 & 0.980 & 32.519 & 0.892 & 41.831 & 0.977 & \underline{44.031} & \underline{0.985} & \textbf{44.408} & \textbf{0.990} \\
		\midrule
		
		\multirow{4}{*}{\shortstack{\emph{Toy}\\(256$\times$256$\times$31)}}
		& $1\%$ & 23.558 & 0.723 & 23.798 & 0.658 & 25.225 & 0.751 & 19.940 & 0.586 & 26.306 & 0.745 & \underline{27.791} & \underline{0.800} & \textbf{28.658} & \textbf{0.894} \\
		& $2\%$ & 25.326 & 0.803 & 27.016 & 0.767 & 28.156 & 0.843 & 22.457 & 0.696 & 29.059 & 0.825 & \underline{31.260} & \underline{0.909} & \textbf{32.995} & \textbf{0.951} \\
		& $5\%$ & 28.053 & 0.865 & 33.019 & 0.918 & 33.353 & 0.929 & 25.758 & 0.812 & 36.322 & 0.967 & \underline{37.227} & \underline{0.970} & \textbf{39.617} & \textbf{0.987} \\
		& $10\%$ & 31.094 & 0.905 & 39.125 & 0.974 & 39.082 & 0.978 & 28.585 & 0.886 & 42.154 & 0.990 & \underline{44.861} & \underline{0.994} & \textbf{45.763} & \textbf{0.995} \\
		\bottomrule
	\end{tabular*}
	
	\vspace{2mm}
	The \textbf{best} and \underline{second-best} results are highlighted.
\end{table*}

\begin{figure*}[htbp]
	\footnotesize
	\begin{tabular*}{\linewidth}{CCCCCCCCC@{}}
		Observation & TNN & HLRTF & S2NTNN & MTTD & DTR & DELTA  & PLTD & Ground Truth \\
		
		\includegraphics[width=0.105\linewidth]{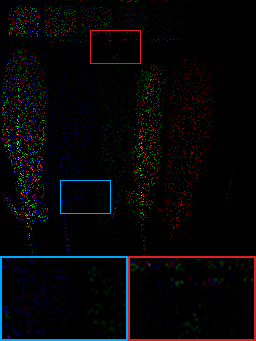} &
		\includegraphics[width=0.105\linewidth]{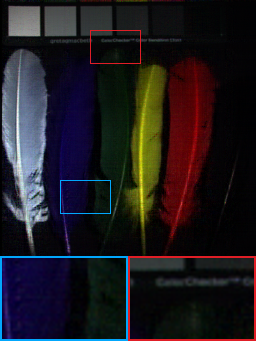} &
		\includegraphics[width=0.105\linewidth]{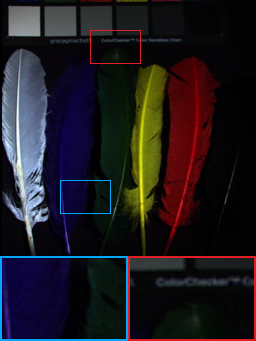} &
		\includegraphics[width=0.105\linewidth]{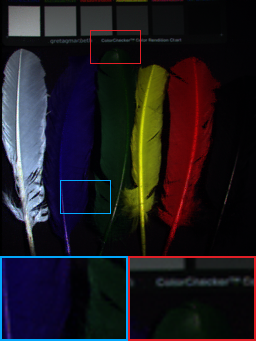} &
		\includegraphics[width=0.105\linewidth]{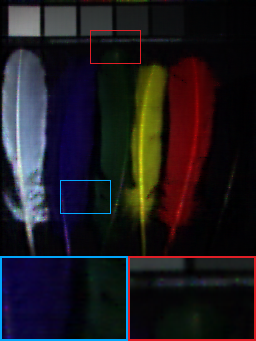} &
		\includegraphics[width=0.105\linewidth]{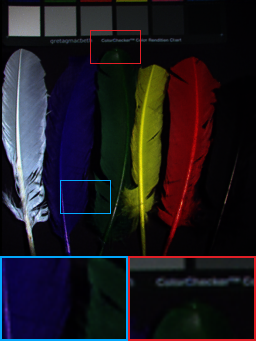} &
		\includegraphics[width=0.105\linewidth]{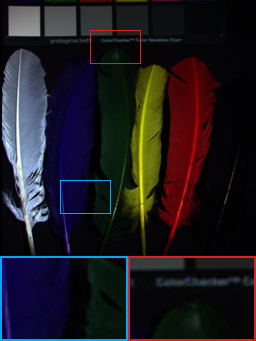} &
		\includegraphics[width=0.105\linewidth]{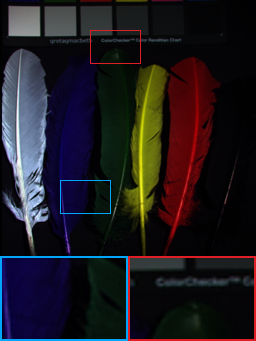} &
		\includegraphics[width=0.105\linewidth]{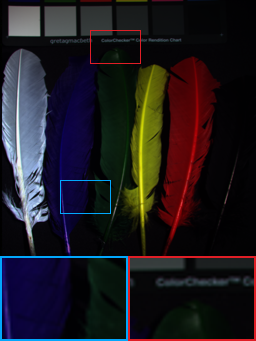} \\
		
		\includegraphics[width=0.105\linewidth]{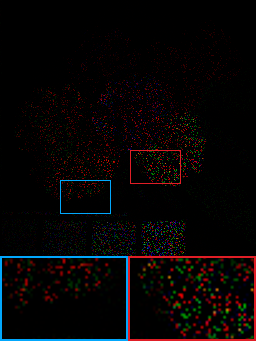} &
		\includegraphics[width=0.105\linewidth]{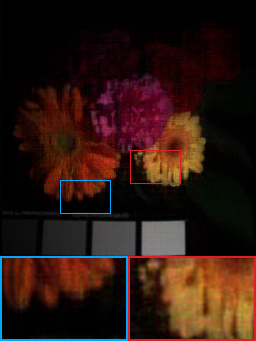} &
		\includegraphics[width=0.105\linewidth]{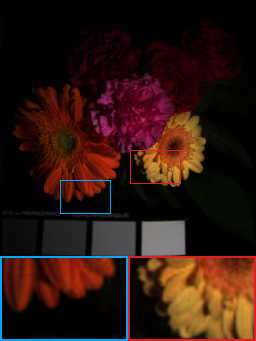} &
		\includegraphics[width=0.105\linewidth]{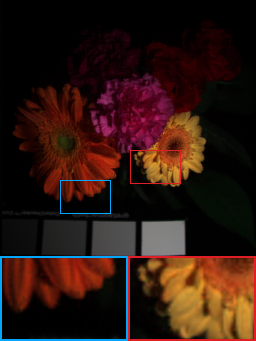} &
		\includegraphics[width=0.105\linewidth]{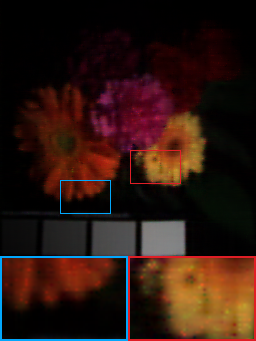} &
		\includegraphics[width=0.105\linewidth]{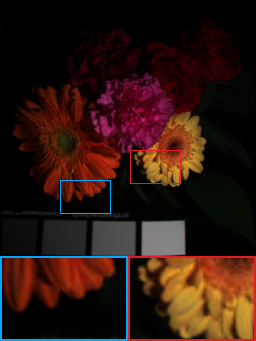} &
		\includegraphics[width=0.105\linewidth]{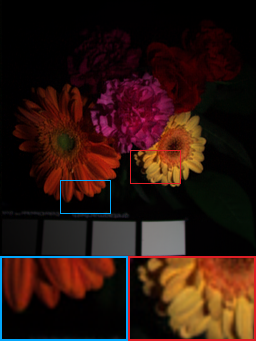} &
		\includegraphics[width=0.105\linewidth]{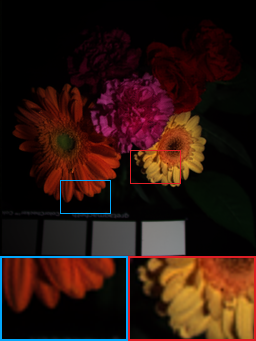} &
		\includegraphics[width=0.105\linewidth]{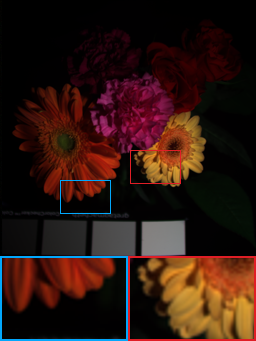} \\
		
	\end{tabular*}
	\caption{Results of multi-dimensional image inpainting by different methods on MSIs \emph{Feathers} and \emph{Flowers} with SR = $10\%$.}
	\label{fig:msi}
\end{figure*}

\subsection{Multispectral Image Recovery}
In this experiment, we investigate the cross-domain generalization ability of PLTD for multispectral image (MSI) recovery. Experiments are conducted on the CAVE dataset, which contains MSIs of size $256 \times 256 \times 31$. Note that the pre-trained knowledge is extracted from DINOv3, which is trained on large-scale RGB images rather than MSIs. Therefore, directly transferring this pre-trained knowledge to MSI recovery introduces a substantial domain gap. 
To enable the pre-trained DINOv3 to process MSIs, we convert each MSI into a pseudo-color image and use it as the input of DINOv3. Specifically, we construct the pseudo-color image using bands 28, 17, and 6 of the CAVE dataset, whose central wavelengths are closest to the red, green, and blue channels, respectively. We evaluate PLTD under two representative degradation settings: random missing and structural missing.

In the case of random missing, we randomly sample MSIs with SRs of $1\%$, $2\%$, $5\%$, and $10\%$. Table~\ref{tab:msim} shows that PLTD consistently outperforms all competing methods across four MSI datasets and four SRs, demonstrating its strong capability to characterize complex multi-dimensional structures.
\begin{figure*}[htbp]
	\centering
	\footnotesize
	\setlength{\tabcolsep}{1.2pt}
	\renewcommand{\arraystretch}{1.02}
	
	\newcommand{\figw}{0.105\linewidth}
	
	\begin{tabular}{@{}ccccccccc@{}}
		
		{\scriptsize PSNR: 20.086} &
		{\scriptsize PSNR: 41.627} &
		{\scriptsize PSNR: 42.732} &
		{\scriptsize PSNR: 42.019} &
		{\scriptsize PSNR: 44.771} &
		{\scriptsize PSNR: 46.618} &
		{\scriptsize PSNR: 47.013} &
		{\scriptsize PSNR: 48.760} &
		{\scriptsize PSNR: inf} \\
		{\scriptsize SSIM: 0.674} &
		{\scriptsize SSIM: 0.987} &
		{\scriptsize SSIM: 0.989} &
		{\scriptsize SSIM: 0.992} &
		{\scriptsize SSIM: 0.994} &
		{\scriptsize SSIM: 0.996} &
		{\scriptsize SSIM: 0.995} &
		{\scriptsize SSIM: 0.998} &
		{\scriptsize SSIM: 1} \\
		
		\includegraphics[width=\figw]{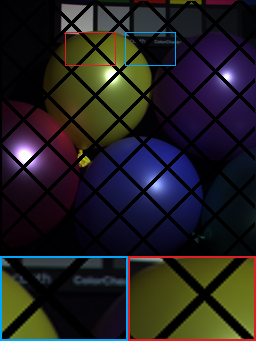} &
		\includegraphics[width=\figw]{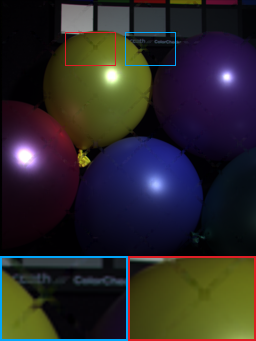} &
		\includegraphics[width=\figw]{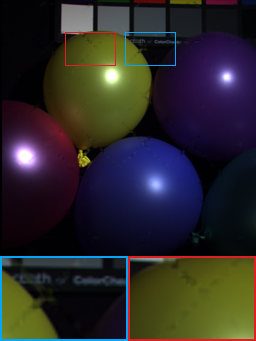} &
		\includegraphics[width=\figw]{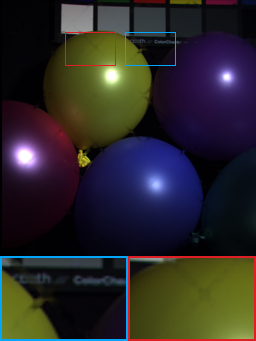} &
		\includegraphics[width=\figw]{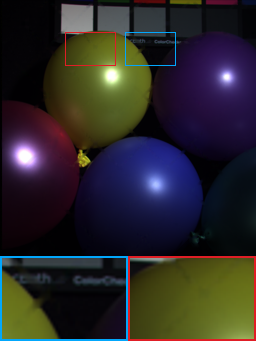} &
		\includegraphics[width=\figw]{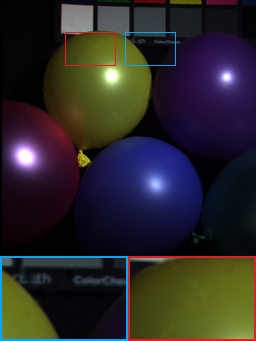} &
		\includegraphics[width=\figw]{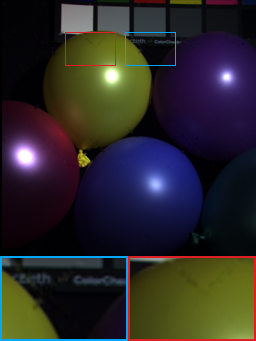} &
		\includegraphics[width=\figw]{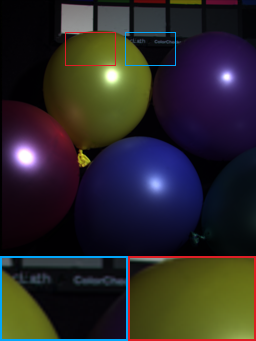} &
		\includegraphics[width=\figw]{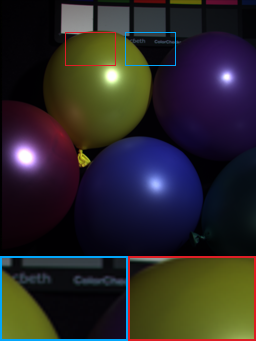} \\
		
		{\scriptsize PSNR: 19.519} &
		{\scriptsize PSNR: 32.441} &
		{\scriptsize PSNR: 33.047} &
		{\scriptsize PSNR: 32.789} &
		{\scriptsize PSNR: 34.052} &
		{\scriptsize PSNR: 33.418} &
		{\scriptsize PSNR: 34.509} &
		{\scriptsize PSNR: 35.276} &
		{\scriptsize PSNR: inf} \\
		{\scriptsize SSIM: 0.823} &
		{\scriptsize SSIM: 0.953} &
		{\scriptsize SSIM: 0.955} &
		{\scriptsize SSIM: 0.956} &
		{\scriptsize SSIM: 0.965} &
		{\scriptsize SSIM: 0.970} &
		{\scriptsize SSIM: 0.961} &
		{\scriptsize SSIM: 0.978} &
		{\scriptsize SSIM: 1} \\
		
		\includegraphics[width=\figw]{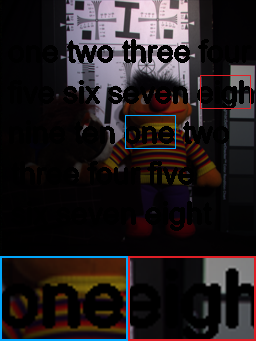} &
		\includegraphics[width=\figw]{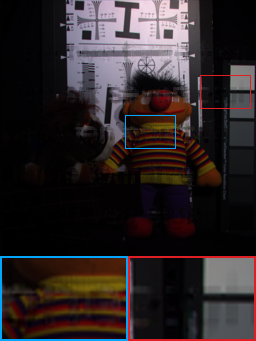} &
		\includegraphics[width=\figw]{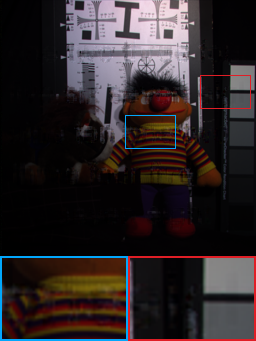} &
		\includegraphics[width=\figw]{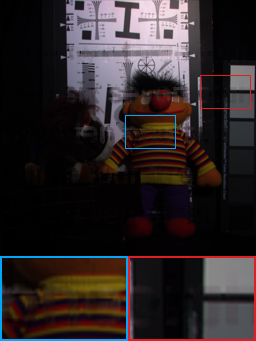} &
		\includegraphics[width=\figw]{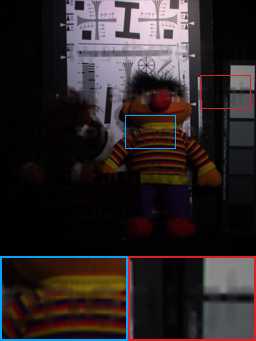} &
		\includegraphics[width=\figw]{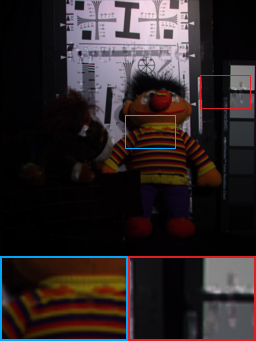} &
		\includegraphics[width=\figw]{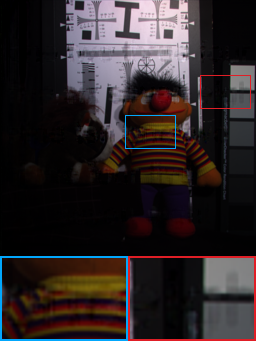} &
		\includegraphics[width=\figw]{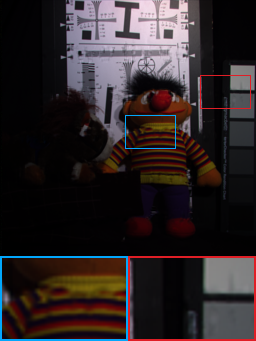} &
		\includegraphics[width=\figw]{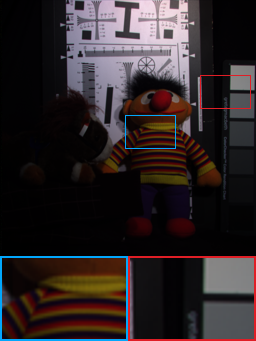} \\
		
		\textbf{Observation} &
		\textbf{TNN} &
		\textbf{HLRTF} &
		\textbf{S2NTNN} &
		\textbf{MTTD} &
		\textbf{DTR} &
		\textbf{DELTA} &
		\textbf{PLTD} &
		\textbf{Ground Truth} \\
	\end{tabular}
	
	\caption{Visual recovery results of tensor structural missing on MSIs \emph{Balloons} and \emph{Toy}, corrupted by black stripes and black text, respectively.}
	\label{fig:structure}
\end{figure*}

\begin{table*}[htbp]
	\caption{Quantitative metrics of the different methods on remote sensing images under the different sampling conditions.}
	\label{tab:rsm}
	\centering
	\footnotesize
	\setlength{\tabcolsep}{1.2pt}
	\renewcommand{\arraystretch}{1.08}
	
	\begin{tabular*}{\linewidth}{@{\extracolsep{\fill}} c c cc cc cc cc cc cc cc}
		\toprule
		\multirow{2}{*}{Data} & \multirow{2}{*}{SR}
		& \multicolumn{2}{c}{TNN}
		& \multicolumn{2}{c}{HLRTF}
		& \multicolumn{2}{c}{S2NTNN}
		& \multicolumn{2}{c}{MTTD}
		& \multicolumn{2}{c}{DTR}
		& \multicolumn{2}{c}{DELTA}
		& \multicolumn{2}{c}{PLTD} \\
		\cmidrule(lr){3-4}\cmidrule(lr){5-6}\cmidrule(lr){7-8}
		\cmidrule(lr){9-10}\cmidrule(lr){11-12}\cmidrule(lr){13-14}
		\cmidrule(lr){15-16}
		& & PSNR & SSIM & PSNR & SSIM & PSNR & SSIM & PSNR & SSIM & PSNR & SSIM & PSNR & SSIM & PSNR & SSIM \\
		\midrule
		
		\multirow{4}{*}{\shortstack{\emph{Sense1}\\(256$\times$256$\times$4)}}
		& $2\%$ & 19.031 & 0.233 & 20.778 & 0.294 & 21.561 & 0.374 & \underline{22.910} & \underline{0.518} & 19.711 & 0.256 & 22.284 & 0.423 & \textbf{23.112} & \textbf{0.538} \\
		& $5\%$ & 21.550 & 0.338 & 22.032 & 0.382 & 22.674 & 0.457 & \underline{24.720} & \underline{0.604} & 22.766 & 0.454 & 24.213 & 0.537 & \textbf{25.053} & \textbf{0.666} \\
		& $10\%$ & 23.142 & 0.456 & 24.162 & 0.555 & 24.402 & 0.581 & \underline{26.532} & \underline{0.703} & 24.554 & 0.605 & 26.489 & 0.696 & \textbf{26.964} & \textbf{0.780} \\
		& $15\%$ & 24.641 & 0.564 & 25.976 & 0.676 & 26.071 & 0.690 & \underline{27.873} & 0.770 & 27.417 & \underline{0.787} & 27.781 & 0.773 & \textbf{28.908} & \textbf{0.840} \\
		\midrule
		
		\multirow{4}{*}{\shortstack{\emph{Sense2}\\(256$\times$256$\times$4)}}
		& $2\%$ & 19.513 & 0.250 & 22.125 & 0.362 & \underline{23.110} & 0.436 & 17.489 & 0.452 & 21.193 & 0.224 & 22.944 & \underline{0.496} & \textbf{24.148} & \textbf{0.588} \\
		& $5\%$ & 22.903 & 0.405 & 23.702 & 0.471 & 24.189 & 0.519 & \underline{25.852} & \underline{0.623} & 23.820 & 0.515 & 25.126 & 0.611 & \textbf{25.929} & \textbf{0.675} \\
		& $10\%$ & 24.451 & 0.524 & 25.500 & 0.607 & 25.664 & 0.627 & \underline{27.543} & \underline{0.717} & 26.999 & 0.697 & 27.055 & 0.705 & \textbf{28.071} & \textbf{0.776} \\
		& $15\%$ & 25.854 & 0.615 & 27.170 & 0.717 & 27.358 & 0.731 & \underline{28.949} & \underline{0.784} & 28.429 & 0.765 & 28.531 & 0.770 & \textbf{29.662} & \textbf{0.839} \\
		\midrule
		
		\multirow{4}{*}{\shortstack{\emph{Sense3}\\(256$\times$256$\times$4)}}
		& $2\%$ & 18.273 & 0.187 & 20.818 & 0.235 & 21.503 & 0.274 & 16.626 & \underline{0.302} & 19.697 & 0.192 & \underline{22.455} & \underline{0.302} & \textbf{22.964} & \textbf{0.426} \\
		& $5\%$ & 21.485 & 0.283 & 22.299 & 0.368 & 22.851 & 0.407 & \textbf{24.592} & \underline{0.468} & 21.674 & 0.307 & 24.184 & 0.439 & \underline{24.466} & \textbf{0.554} \\
		& $10\%$ & 22.959 & 0.399 & 24.092 & 0.530 & 24.304 & 0.532 & \underline{25.583} & 0.559 & 24.733 & 0.571 & 25.355 & \underline{0.592} & \textbf{26.160} & \textbf{0.686} \\
		& $15\%$ & 24.420 & 0.518 & 25.748 & 0.652 & 25.863 & 0.658 & \underline{27.527} & 0.701 & 26.536 & 0.675 & 26.947 & \underline{0.702} & \textbf{27.846} & \textbf{0.766} \\
		\midrule
		
		\multirow{4}{*}{\shortstack{\emph{Sense4}\\(256$\times$256$\times$4)}}
		& $2\%$ & 17.057 & 0.157 & 18.270 & 0.180 & 18.847 & 0.222 & \underline{20.992} & \underline{0.332} & 18.708 & 0.192 & 20.556 & 0.306 & \textbf{21.742} & \textbf{0.478} \\
		& $5\%$ & 19.433 & 0.233 & 20.183 & 0.323 & 20.213 & 0.336 & \underline{22.877} & \underline{0.469} & 20.895 & 0.393 & 22.601 & 0.467 & \textbf{23.423} & \textbf{0.615} \\
		& $10\%$ & 20.949 & 0.364 & 22.201 & 0.509 & 22.415 & 0.525 & \underline{24.577} & 0.616 & 23.598 & \underline{0.632} & 24.033 & 0.580 & \textbf{25.236} & \textbf{0.755} \\
		& $15\%$ & 22.399 & 0.483 & 23.963 & 0.637 & 23.901 & 0.638 & 26.090 & 0.721 & 25.625 & 0.746 & \underline{26.583} & \underline{0.766} & \textbf{27.043} & \textbf{0.823} \\
		\bottomrule
	\end{tabular*}
	
	\vspace{2mm}
	The \textbf{best} and \underline{second-best} results are highlighted.
\end{table*}
Figure~\ref{fig:msi} further provides a visual comparison at a sampling rate of $10\%$. It can be observed that the competing methods suffer from different levels of detail degradation, including blurred edges and indistinct textural details. In contrast, PLTD recovers more faithful image structure and sharper local textures. These visual results are consistent with the quantitative improvements in Table~\ref{tab:msim}, further demonstrating that the organic integration of pre-trained knowledge with the learnable low-rank tensor enables PLTD to achieve strong cross-domain generalization and accurate recovery under different SRs.

In the case of structural missing, we consider two representative structural missing patterns: black stripes and black text types.
Unlike random missing, structural missing severely damages local structures in spatially contiguous regions, substantially reducing the available local information and making recovery more dependent on global structural information. 
Fig.~\ref{fig:structure} presents the visual recovery results under different structural missing patterns. 
As shown in the enlarged regions, the competing methods exhibit limited capability in faithfully reconstructing the missing local content. TNN, HLRTF, and S2NTNN produce blurred or structurally distorted patches, whereas MTTD, DTR, and DELTA recover the dominant appearance but still fail to accurately preserve fine boundaries and local textures. In contrast, PLTD reconstructs the missing regions with more coherent structures and clearer local details. These results demonstrate that pre-trained knowledge provides reliable global guidance, while its organic integration with the learnable low-rank tensor enables PLTD to effectively exploit such guidance and faithfully preserve the underlying structural information even under severe structural missing.
\subsection{Remote Sensing Image Recovery}
In this experiment, we further evaluate the generalization capability of PLTD on more challenging real-world remote sensing images (i.e., high-resolution NAIP CNIR images). The images have a spatial resolution of $0.6$ m and consist of four spectral bands, namely red, green, blue, and near-infrared (NIR).
As shown in Table~\ref{tab:rsm}, PLTD consistently achieves higher PSNR and SSIM values under different sampling rates, demonstrating its strong robustness and recovery accuracy. 
More importantly, the improvement in SSIM indicates that the proposed method can better preserve structural consistency in remote sensing images.
The visual comparisons in Fig.~\ref{fig:rs} further verify the effectiveness of PLTD. 
Traditional tensor decomposition methods tend to produce over-smoothed results and lose fine spatial details. 
In contrast, PLTD recovers clearer textures, sharper field boundaries, and more faithful structures. 
The zoomed-in regions show that our method better preserves local texture details and structural edges.
These results demonstrate that the organic integration of pre-trained knowledge with the learnable low-rank tensor enables PLTD to effectively model complex structures and generalize well to real-world remote sensing images.

\begin{figure*}[htbp]
	\centering
	\footnotesize
	\begin{tabular*}{\linewidth}{CCCCCCCCC@{}}
		Observation & TNN & HLRTF & S2NTNN & MTTD & DTR & DELTA  & PLTD & Ground Truth \\
		
		\includegraphics[width=0.105\linewidth]{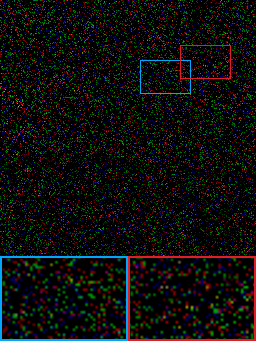} &
		\includegraphics[width=0.105\linewidth]{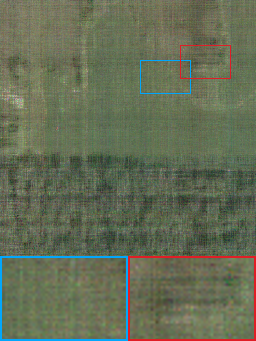} &
		\includegraphics[width=0.105\linewidth]{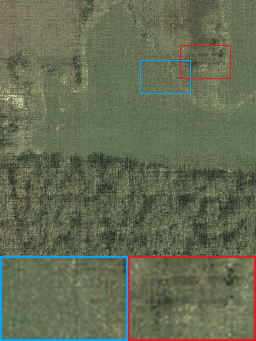} &
		\includegraphics[width=0.105\linewidth]{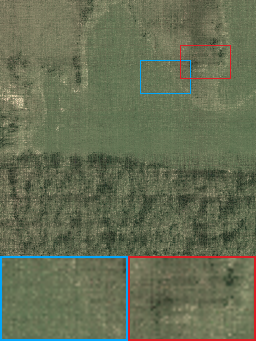} &
		\includegraphics[width=0.105\linewidth]{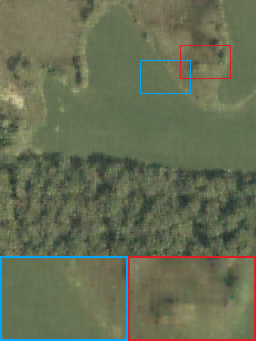} &
		\includegraphics[width=0.105\linewidth]{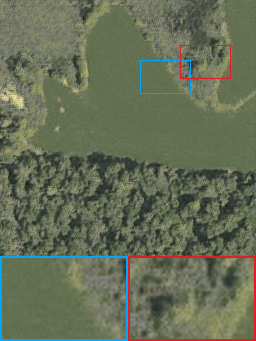} &
		\includegraphics[width=0.105\linewidth]{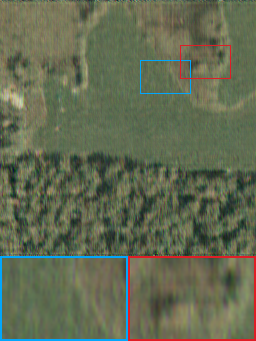} &
		\includegraphics[width=0.105\linewidth]{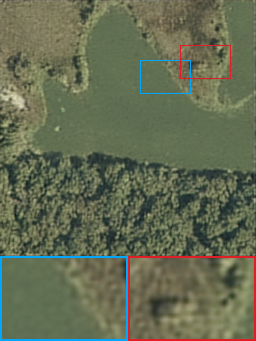} &
		\includegraphics[width=0.105\linewidth]{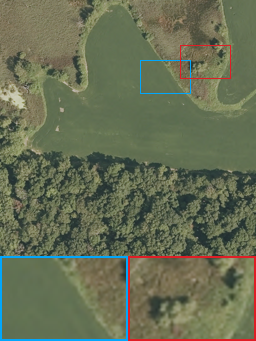} \\
		
		\includegraphics[width=0.105\linewidth]{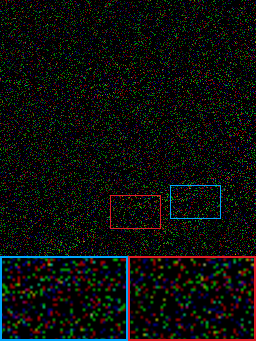} &
		\includegraphics[width=0.105\linewidth]{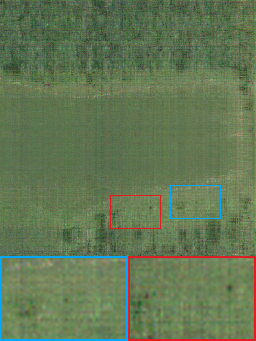} &
		\includegraphics[width=0.105\linewidth]{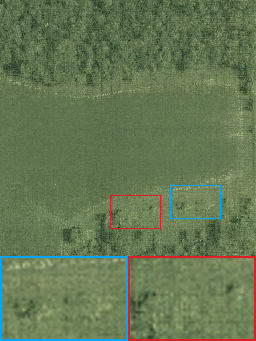} &
		\includegraphics[width=0.105\linewidth]{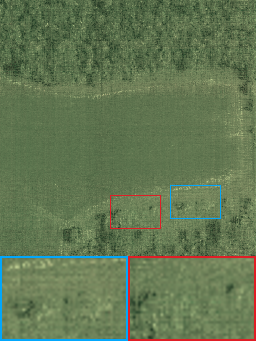} &
		\includegraphics[width=0.105\linewidth]{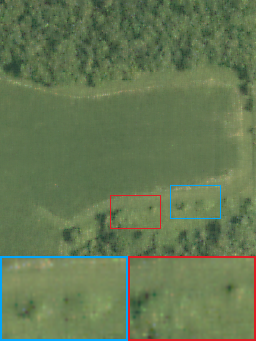} &
		\includegraphics[width=0.105\linewidth]{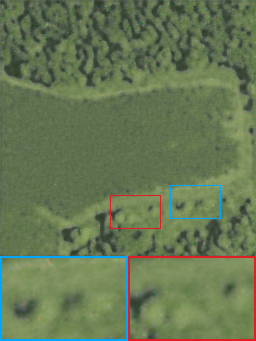} &
		\includegraphics[width=0.105\linewidth]{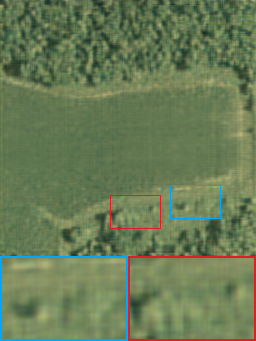} &
		\includegraphics[width=0.105\linewidth]{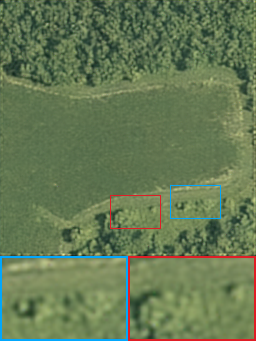} &
		\includegraphics[width=0.105\linewidth]{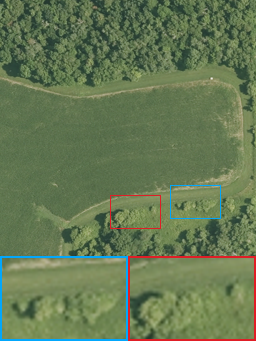} \\
	\end{tabular*}
	\caption{Results of multi-dimensional image inpainting by different methods on remote sensing images \emph{Sense1} and \emph{Sense2} with SR = $10\%$.}
	\label{fig:rs}
\end{figure*}

\subsection{Magnetic Resonance Imaging Recovery}
\begin{figure*}[htbp]
	\centering
	\footnotesize
	\begin{tabular*}{\linewidth}{CCCCCCCCC@{}}
		Observation & TNN & HLRTF & S2NTNN & MTTD & DTR & DELTA & PLTD & Ground Truth \\
		
		\includegraphics[width=0.105\linewidth]{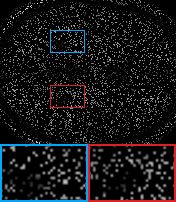} &
		\includegraphics[width=0.105\linewidth]{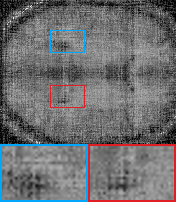} &
		\includegraphics[width=0.105\linewidth]{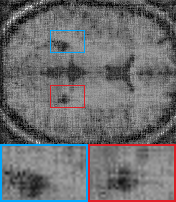} &
		\includegraphics[width=0.105\linewidth]{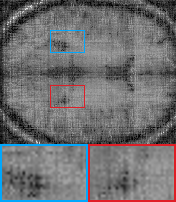} &
		\includegraphics[width=0.105\linewidth]{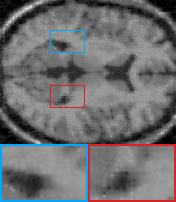} &
		\includegraphics[width=0.105\linewidth]{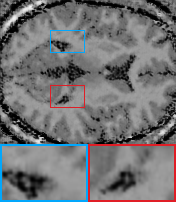} &
		\includegraphics[width=0.105\linewidth]{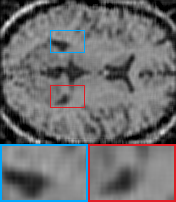} &
		\includegraphics[width=0.105\linewidth]{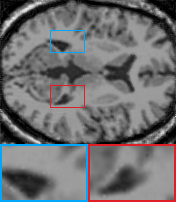} &
		\includegraphics[width=0.105\linewidth]{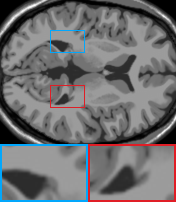} \\
		
		\includegraphics[width=0.105\linewidth]{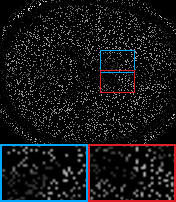} &
		\includegraphics[width=0.105\linewidth]{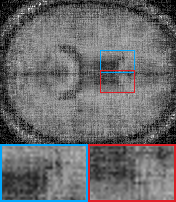} &
		\includegraphics[width=0.105\linewidth]{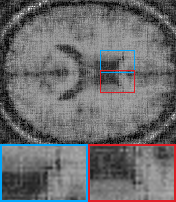} &
		\includegraphics[width=0.105\linewidth]{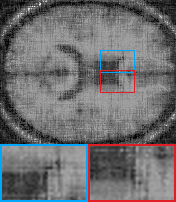} &
		\includegraphics[width=0.105\linewidth]{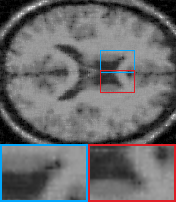} &
		\includegraphics[width=0.105\linewidth]{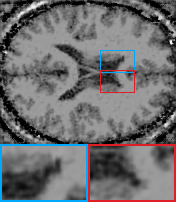} &
		\includegraphics[width=0.105\linewidth]{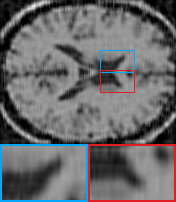} &
		\includegraphics[width=0.105\linewidth]{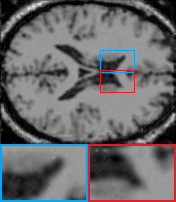} &
		\includegraphics[width=0.105\linewidth]{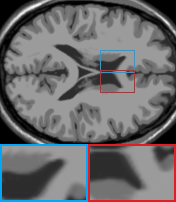} \\
	\end{tabular*}
	
	\caption{Results of multi-dimensional image inpainting by different methods on brain MRI slices at different locations with SR = $15\%$.}
	\label{fig:mri}
\end{figure*}
\begin{table}[htbp]
	\centering
	\caption{Quantitative comparison of different methods for MRI completion on data of size $144\times176\times3$. The \textbf{best} results are in bold and the second-best results are underlined.}
	\label{tab:mri_completion}
	\setlength{\tabcolsep}{5.5pt}
	\renewcommand{\arraystretch}{1.08}
	\begin{tabular}{lcccccc}
		\toprule
		SR 
		& \multicolumn{2}{c}{0.10}
		& \multicolumn{2}{c}{0.15}
		& \multicolumn{2}{c}{0.20} \\
		\cmidrule(lr){2-3}
		\cmidrule(lr){4-5}
		\cmidrule(lr){6-7}
		Method
		& PSNR & SSIM
		& PSNR & SSIM
		& PSNR & SSIM \\
		\midrule
		
		TNN
		& 14.660 & 0.190
		& 15.800 & 0.269
		& 16.917 & 0.349 \\
		
		HLRTF
		& 15.539 & 0.326
		& 16.787 & 0.466
		& 17.853 & 0.576 \\
		
		S2NTNN
		& 15.675 & 0.334
		& 17.177 & 0.478
		& 18.404 & 0.596 \\
		
		MTTD
		& \underline{19.511} & \underline{0.607}
		& \underline{21.194} & 0.698
		& \underline{22.638} & 0.762 \\
		
		DTR
		& 15.729 & 0.496
		& 17.037 & 0.640
		& 18.077 & \underline{0.804} \\
		
		DELTA
		& 19.086 & 0.594
		& 20.685 & \underline{0.738}
		& 21.509 & 0.766 \\
		
		PLTD
		& \textbf{20.608} & \textbf{0.809}
		& \textbf{22.087} & \textbf{0.868}
		& \textbf{23.217} & \textbf{0.921} \\
		
		\bottomrule
	\end{tabular}
\end{table}
In this experiment, we investigate the generalization capability of PLTD on magnetic resonance images (MRI). Compared with the color images, MRI exhibits substantially different visual characteristics and imaging mechanisms, introducing a more challenging domain gap. We conduct recovery experiments under random missing observations with SRs of $10\%$, $15\%$, and $20\%$.
As shown in Table~\ref{tab:mri_completion}, PLTD consistently achieves the highest PSNR and SSIM values under all sampling rates, demonstrating its strong recovery capability on MRI data. In particular, at an SR of 10\%, PLTD achieves a PSNR of 20.608 dB and an SSIM of 0.809, outperforming the second-best results by 1.097 dB and 0.202, respectively. Similar advantages can also be observed at higher sampling rates. 
The visual comparisons in Fig.~\ref{fig:mri} further verify the effectiveness of PLTD. The competing methods suffer from different degrees of structural distortion and blurred boundaries. In contrast, PLTD recovers more coherent structures and clearer tissue boundaries, producing results that are visually closer to the ground truth. These results demonstrate that the organic integration of pre-trained common structure with the learnable low-rank tensor enables PLTD to effectively generalize to medical images with a substantial domain gap. 
\section{Discussion}\label{sec6}
In this section, we provide further discussions on several perspectives of the proposed PLTD framework.
We first discuss the benefits of introducing pre-trained common structure into tensor decomposition. Then, we investigate the complementary roles of the pre-trained latent tensor and the learnable low-rank latent tensor, revealing how they jointly contribute to effective tensor modeling. Furthermore, we investigate the influence of different low-rank structures on modeling instance-specific structure in PLTD. Finally, we examine the applicability of PLTD to high-level vision tasks and analyze its convergence behavior to further demonstrate the benefits of incorporating pre-trained common structure.

\subsection{Benefits of Pre-Trained Common Structure}
To the best of our knowledge, this work is the first to introduce a pre-trained large vision model into tensor decomposition. Traditional tensor decomposition methods mainly characterize the instance-specific structure of each target tensor independently. However, tensor decompositions neglect the common structure across different images, leading to limited modeling capability, high computational cost, and a large number of learnable parameters. By incorporating the common structure learned from large vision models, PLTD extends tensor modeling beyond instance-specific structure to jointly characterize common and instance-specific structure.
Compared with classical instance-specific tensor modeling, PLTD leverages informative common structure while learning only a lightweight low-rank tensor for instance-specific modeling. As shown in Table~\ref{tab:parameter_efficiency}, PLTD achieves more favorable recovery performance with significantly fewer learnable parameters and shorter running time. 

This comparison further provides new insights into the design of next-generation tensor decomposition methods. Rather than relying solely on instance-specific modeling, future tensor decomposition methods can leverage common structure as a foundation while integrating low-rank tensor decomposition to characterize the instance-specific structure of the target tensor. This shift from instance-specific modeling toward common structure modeling opens a new direction for developing more expressive and efficient tensor decomposition frameworks.
\begin{table}[htbp]
	\centering
	\caption{Comparison of recovery performance, model parameters, and running time on the MSI \emph{Balloons} dataset.}
	\label{tab:parameter_efficiency}
	\setlength{\tabcolsep}{3pt}
	\renewcommand{\arraystretch}{1.18}
	\definecolor{lightorange}{RGB}{245,226,216}
	
	\resizebox{\linewidth}{!}{
		\begin{tabular}{lccccc}
			\toprule
			\textbf{Method} & \textbf{Transform Arch}  & \textbf{PSNR$\uparrow$} & \textbf{SSIM$\uparrow$} & \textbf{Params} & \textbf{Time}\\
			\midrule
			HLRTF  & FCN              & 30.083 dB & 0.797 & 3.165 M  & 10.235 s  \\
			S2NTNN & FCN             & 31.459 dB & 0.887 & 2.170 M  & 85.832 s  \\			
			\rowcolor{lightorange}
			PLTD & FCN             & \textbf{32.458 dB} & \textbf{0.913} & \textbf{0.171 M} & \textbf{4.221 s}  \\
			\midrule
			DTR  & CNN & 34.696 dB & 0.933 & 4.080 M & 92.352 s  \\
			DELTA  & CNN \& Transformer & 37.076 dB & 0.968 & 10.596 M & 86.396 s  \\
			\rowcolor{lightorange}
			PLTD & CNN             & \textbf{39.145 dB} & \textbf{0.983} & \textbf{0.988 M}  & \textbf{19.170 s}   \\
			\bottomrule
		\end{tabular}
	}
\end{table}

\subsection{Synergy between Pre-Trained Latent Tensor and Learnable Low-Rank Latent Tensor}\label{subsec2}
To further understand the latent tensor in PLTD, we analyze the synergy between the fixed pre-trained latent tensor and the learnable low-rank latent tensor. As shown in Table~\ref{tab:comparison}, using only the low-rank latent tensor (i.e., $\mathcal{X} = g_{\boldsymbol\theta} ((\mathbf A\mathbf B)\circ\mathbf c)$) leads to inferior recovery performance, especially at low sampling rates. Using only the pre-trained latent tensor (i.e., $\mathcal{X} = g_{\boldsymbol\theta} (\mathcal{S})$) also achieves limited recovery performance, which improves slightly as the sampling rate increases. By integrating the two components, PLTD consistently achieves the best performance, demonstrating their complementary roles in tensor recovery.

Fig.~\ref{fig:component} displays that the pre-trained latent tensor provides a reliable common structure, while the learnable low-rank latent tensor complements this foundation by characterizing the instance-specific structure of the target tensor. These results further show that the pre-trained latent tensor and the learnable low-rank latent tensor play complementary roles in PLTD. 
\begin{table}[htbp]
	\centering
	\footnotesize
	\caption{Comparison of the pre-trained frozen latent tensor and learnable low-rank latent tensor under different sampling rates on the \emph{Balloons} dataset.}
	\label{tab:comparison}
	\setlength{\tabcolsep}{6pt}
	\renewcommand{\arraystretch}{1.15}
	\begin{tabular}{c cc cc cc}
		\toprule
		\multirow{2}{*}{SR} 
		& \multicolumn{2}{c}{Low-Rank Only} 
		& \multicolumn{2}{c}{Pre-Trained Only} 
		& \multicolumn{2}{c}{PLTD} \\
		\cmidrule(lr){2-3} \cmidrule(lr){4-5} \cmidrule(lr){6-7}
		& PSNR & SSIM & PSNR & SSIM & PSNR & SSIM \\
		\midrule
		1\%  & 22.517 & 0.599 & 30.708 & 0.929 & \textbf{35.224} & \textbf{0.967} \\
		2\%  & 27.773 & 0.692 & 34.227 & 0.967 & \textbf{39.145} & \textbf{0.983} \\
		5\%  & 39.575 & 0.969 & 37.852 & 0.979 &\textbf{44.809} & \textbf{0.995} \\
		10\% & 43.538 & 0.987 & 39.518 & 0.984 &\textbf{48.767} & \textbf{0.997} \\
		\bottomrule
	\end{tabular}
\end{table}
\begin{figure}[htbp]
	\centering
	\setlength{\tabcolsep}{2pt}
	\begin{tabular}{@{}cccc@{}}
		{\scriptsize Low-Rank Only} & {\scriptsize Pre-Trained Only} & {\scriptsize PLTD} & {\scriptsize Ground Truth} \\
		\includegraphics[width=2cm]{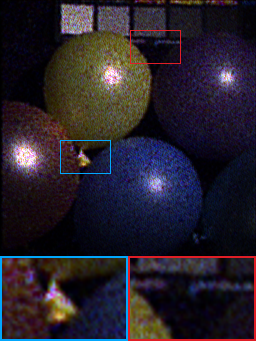} &
		\includegraphics[width=2cm]{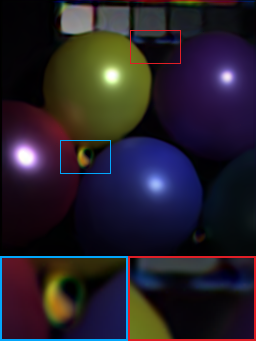} &
		\includegraphics[width=2cm]{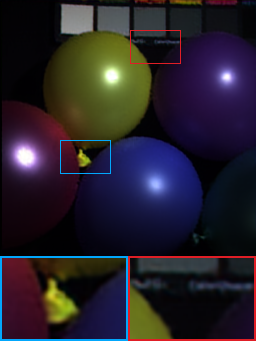} &
		\includegraphics[width=2cm]{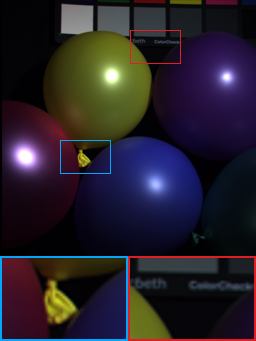} \\
	\end{tabular}
	\caption{Visual comparison of low-rank only, pre-trained only, and PLTD. The pre-trained latent tensor provides the principal structural information, while the learnable low-rank latent tensor complements it with local details. The two components are complementary and jointly indispensable for accurately characterizing the latent tensor.}
	\label{fig:component}
\end{figure}
\begin{figure*}[t]
	\centering
	\setlength{\tabcolsep}{4pt}
	\begin{tabular}{c c c}
		\includegraphics[width=0.31\textwidth]{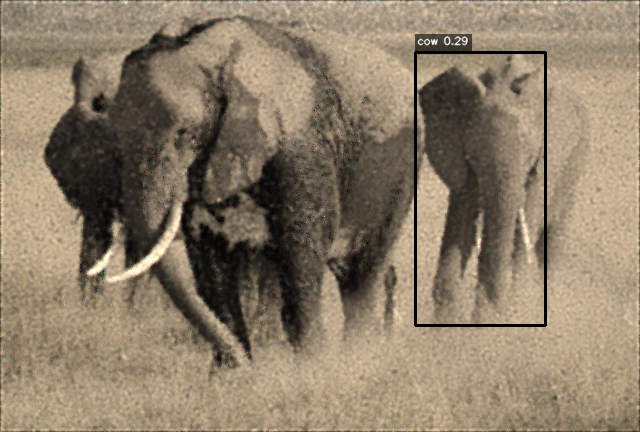} &
		\includegraphics[width=0.31\textwidth]{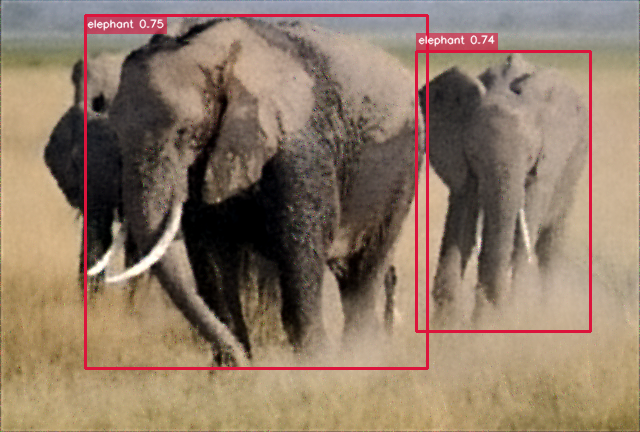} &
		\includegraphics[width=0.31\textwidth]{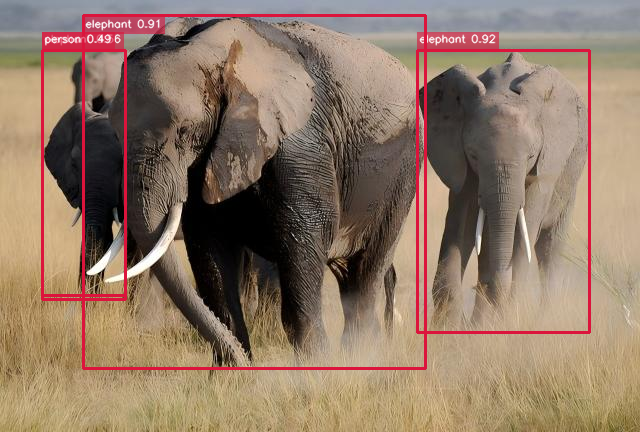} \\
		{\scriptsize w/o Pre-Trained Semantic Knowledge} & {\scriptsize PLTD} & {\scriptsize Ground Truth} \\
	\end{tabular}
	\caption{Object detection results on recovery images using YOLOv11. Benefiting from pre-trained semantic knowledge, the proposed method achieves higher detection confidence.}
	\label{fig:object_detect}
\end{figure*}

\subsection{Other Low-Rank Structures in PLTD} \label{dis: extension}
To investigate the influence of different low-rank structures on instance-specific characterization, we consider different low-rank tensor decompositions in PLTD. Specifically, we consider three classical low-rank structures, including CP decomposition, Tucker decomposition, and slice-wise matrix factorization (MF).

As shown in Table~\ref{tab:low_rank_structure}, different low-rank structures achieve comparable recovery performance, demonstrating the flexibility of PLTD in accommodating various low-rank representations. Among them, our low-rank structure provides a favorable trade-off among recovery fidelity, the number of learnable parameters, and computational efficiency. Specifically, it achieves the highest PSNR and SSIM values while requiring fewer learnable parameters and a shorter running time. Therefore, we adopt this low-rank structure in PLTD for efficient instance-specific characterization of the target tensor.

\begin{table}[htbp]
	\centering
	\caption{Comparison of different low-rank structures.}
	\label{tab:low_rank_structure}
	\setlength{\tabcolsep}{3.7pt}
	\renewcommand{\arraystretch}{1.18}
	\begin{tabular}{ccccc}
		\toprule
		\textbf{Low-Rank Structure} 
		& \textbf{PSNR}$\uparrow$
		& \textbf{SSIM}$\uparrow$
		& \textbf{Params}
		& \textbf{Time} \\
		\midrule
		CP
		& 38.895 dB
		& 0.982 
		& 0.991 M 
		& 23.916 s \\
		Tucker
		& 39.138 dB
		& 0.982 
		& 0.995 M 
		& 23.712 s \\
		
		MF
		& 39.050 dB
		& 0.982 
		& 3.216 M 
		& 20.241 s \\
		
		Our
		& \textbf{39.145 dB}
		& \textbf{0.983 }
		& \textbf{0.988 M }
		& \textbf{19.170 s} \\
		\bottomrule
	\end{tabular}
\end{table}

\subsection{High-Level Vision Task}
Benefiting from the semantic knowledge inherited from the pre-trained DINOv3 model, PLTD not only improves tensor recovery performance but also preserves high-level visual semantics in the reconstructed results. Such semantic-aware recovery enables better compatibility with downstream high-level vision tasks. Accordingly, we perform object detection using YOLOv11~\cite{khanam2024yolov11} on the reconstructed results, where the test image is sampled from the COCO dataset\footnote{\url{https://cocodataset.org}}.
As shown in Fig.~\ref{fig:object_detect}, by leveraging pre-trained semantic knowledge, PLTD enables YOLOv11 to achieve more reliable detection with higher confidence scores.

In contrast, without the pre-trained semantic knowledge, the recovery image may suffer from semantic errors, which lead to incorrect or missing detections. For example, the elephant on the right is mistakenly recognized as a cow, while the elephant on the far left is not successfully detected. These results indicate that pre-trained semantic knowledge facilitates the recovery of semantically discriminative structures, enabling the recovery image to maintain high pixel-level fidelity while better preserving compatibility with high-level vision tasks.

\subsection{Convergence Behavior Analysis}
In this subsection, we further analyze the convergence behavior of the proposed solving algorithm. Specifically, we analyze the loss function values and recovery results of the proposed PLTD-based recovery model with and without the pre-trained latent tensor during optimization.

As shown in Fig.~\ref{fig:loss}, PLTD exhibits a much faster decrease in the loss function value and reaches a low loss function value substantially earlier than the model without the pre-trained latent tensor. This result indicates that the pre-trained latent tensor provides a favorable foundation for tensor recovery, allowing the model to reach a low loss function value with substantially fewer iterations. 

The convergence effectiveness of PLTD is further reflected in the early-stage recovery results shown in Fig.~\ref{fig:Early reconstruction quality}. After only 200 iterations, PLTD already produces visually faithful results with clear structures and details, whereas the model without the pre-trained latent tensor still exhibits noticeable distortions and color inconsistencies. In addition, the spectrum plots show that PLTD recovers the dominant frequency components at an earlier stage, further demonstrating its superior early-stage recovery capability.
\begin{figure}[htbp]
	\centering
	\includegraphics[width=6.5cm]{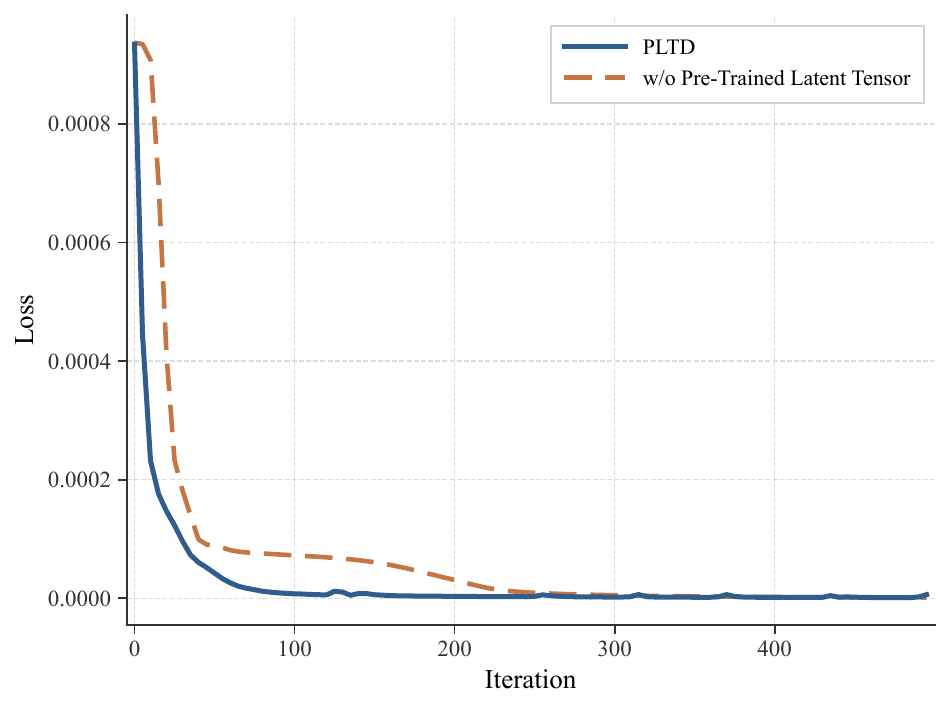}
	\caption{Optimization loss curves with and without the pre-trained latent tensor during the recovery process.}
	\label{fig:loss}
\end{figure}	

\begin{figure}[htbp]
	\centering
	\setlength{\tabcolsep}{2pt} 
	\begin{tabular}{@{}cccc@{}}
		{\scriptsize Observed} & {\scriptsize w/o Pre-Trained} & {\scriptsize PLTD} & {\scriptsize Ground Truth} \\
		\includegraphics[width=2cm]{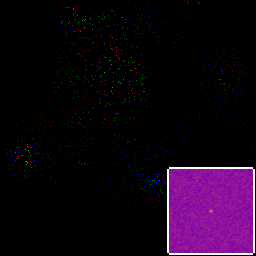} &
		\includegraphics[width=2cm]{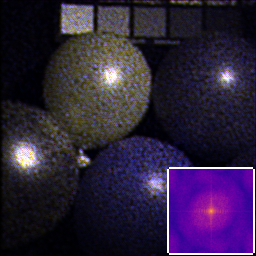} &
		\includegraphics[width=2cm]{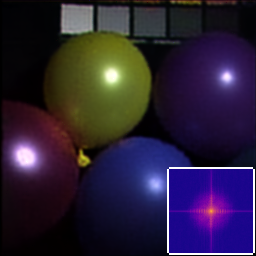} &
		\includegraphics[width=2cm]{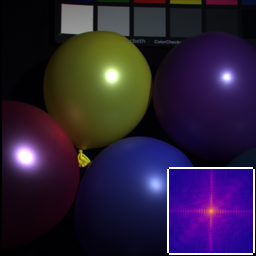} \\
		
		{\scriptsize PSNR: 11.933} & {\scriptsize PSNR: 24.864} & {\scriptsize PSNR: 35.656} & {\scriptsize PSNR: inf} \\
		[-2.59pt]
		{\scriptsize SSIM: 0.342} & {\scriptsize SSIM: 0.611} & {\scriptsize SSIM: 0.957} & {\scriptsize SSIM: 1} \\
	\end{tabular}
	\caption{Early-stage recovery results after 200 iterations. The results show that PLTD already achieves favorable recovery quality at an early stage, with clear spatial structures and faithful spectral information, demonstrating the effectiveness of the proposed PLTD.}
	\label{fig:Early reconstruction quality}
\end{figure}

\section{Conclusion}\label{sec7}
In this work, we introduced pre-training into tensor decomposition and proposed a pre-trained low-rank tensor decomposition (PLTD) framework. Existing tensor decomposition methods characterize the instance-specific structure of each target tensor from scratch, neglecting the common structure across different images. PLTD incorporates common structure inherited from a pre-trained large vision model and employs a lightweight low-rank tensor to characterize instance-specific structure. Extensive experiments on color images, multispectral images, real-world remote sensing images, and magnetic resonance images demonstrate that PLTD consistently achieves superior recovery performance with significantly fewer learnable parameters and a smaller carbon footprint.
Looking forward, this work points to a promising direction for next-generation tensor decomposition, where pre-trained large vision models provide transferable common structures for tensor modeling. By integrating such common structures with compact low-rank modeling, future tensor decomposition methods may achieve more expressive and efficient characterization of multi-dimensional data. 
\bibliographystyle{IEEEtran}
\bibliography{reference}
\newpage

\allowdisplaybreaks

\appendices
\section{Proof of Theorem}
\label{appendix}

\setcounter{definition}{0}
\setcounter{lemma}{0}
\setcounter{theorem}{0}
This appendix establishes the theoretical recovery error bounds for the proposed model. We first introduce the concepts of $\epsilon$-nets and covering numbers, followed by a preliminary lemma on Euclidean balls to support the subsequent analysis.

\begin{definition}[$\epsilon$-Net \cite{Vershynin_2018}] \label{def:06} 
	Let $(\mathfrak{T}, d)$ be a metric space, where $d:\mathfrak{T}\times\mathfrak{T}\to\mathbb{R}_+$ denotes a metric.
	Consider a subset $\mathfrak{L} \subset \mathfrak{T}$ and let $\epsilon>0$.
	A subset $\bar{\mathfrak{L}} \subseteq \mathfrak{L}$ is called an $\epsilon$-net of $\mathfrak{L}$ if every point in $\mathfrak{L}$ is within distance $\epsilon$ of some point of $\bar{\mathfrak{L}}$, i.e.,
	\[
	\forall \mathcal{X} \in \mathfrak{L},\ \exists\ \mathcal{X}_0 \in \bar{\mathfrak{L}}:\ 
	d(\mathcal{X}, \mathcal{X}_0)\le \epsilon .
	\]
\end{definition}

\begin{definition}[$\epsilon$-Covering Number \cite{Vershynin_2018}] \label{def:07}
	The smallest possible cardinality of an $\epsilon$-net of $\mathfrak{L}$ is called the covering number of $\mathfrak{L}$ and is denoted by $\mathcal{N}(\mathfrak{L}, d, \epsilon)$.
	In this paper, we mainly use the Frobenius-norm metric $d(\mathcal{X},\mathcal{X}')=\|\mathcal{X}-\mathcal{X}'\|_F$.
\end{definition}

\begin{lemma}[Covering numbers of the Euclidean ball \cite{Vershynin_2018}]
	The covering number of an \( n \)-dimensional Euclidean ball \( B_2^n(\rho) \) with radius \( \rho \) satisfies the following for any $0<\epsilon\leq\rho$:
	\begin{equation}
		\mathcal{N}\left(B_2^n(\rho), \|\cdot\|_2, \epsilon\right) \leq\left(\frac{3\rho}{\epsilon}\right)^n .
	\end{equation}
	Here, $B_2^n(\rho)$ is defined as $\left\{\mathbf{x} \in \mathbb{R}^n:\|\mathbf{x}\|_2 \leq \rho\right\}$.
\end{lemma}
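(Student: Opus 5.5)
The plan is the standard volumetric argument: build the net as a maximal $\epsilon$-separated subset of the ball, then bound its size by comparing volumes. This produces a net inside $B_2^n(\rho)$ itself, as Definition~\ref{def:06} requires.

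\textbf{Step 1 (separated sets are finite and a maximal one exists).} Call $\{\mathbf{x}_1,\ldots,\mathbf{x}_m\}\subseteq B_2^n(\rho)$ \emph{$\epsilon$-separated} if $\|\mathbf{x}_i-\mathbf{x}_j\|_2>\epsilon$ for all $i\neq j$. For such a set, the open balls $B(\mathbf{x}_i,\epsilon/2)$ are pairwise disjoint, by the triangle inequality. They are all contained in $B_2^n(\rho+\epsilon/2)$.

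Since Lebesgue measure scales as $\mathrm{vol}(B_2^n(t))=t^n\,\mathrm{vol}(B_2^n(1))$, comparing volumes gives
\[
m\Big(\frac{\epsilon}{2}\Big)^n\le\Big(\rho+\frac{\epsilon}{2}\Big)^n .
\]
So every $\epsilon$-separated set has at most $K:=\big((\rho+\epsilon/2)/(\epsilon/2)\big)^n$ elements. Consequently, the greedy procedure terminates after finitely many steps: start from any point, and repeatedly add a point of the ball at distance $>\epsilon$ from all chosen points while one exists. It ends with a maximal $\epsilon$-separated set $\bar{\mathfrak{L}}\subseteq B_2^n(\rho)$. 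Justifying that this maximal finite set exists is the only mildly delicate point, and the uniform cardinality bound handles it.

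\textbf{Step 2 (maximality gives a net).} Suppose some $\mathbf{x}\in B_2^n(\rho)$ had $\|\mathbf{x}-\mathbf{x}_i\|_2>\epsilon$ for every $\mathbf{x}_i\in\bar{\mathfrak{L}}$. Then $\bar{\mathfrak{L}}\cup\{\mathbf{x}\}$ would still be $\epsilon$-separated, contradicting maximality. Hence every point of the ball lies within distance $\epsilon$ of $\bar{\mathfrak{L}}$, so $\bar{\mathfrak{L}}$ is an $\epsilon$-net in the sense of Definition~\ref{def:06}. By Definition~\ref{def:07}, $\mathcal{N}(B_2^n(\rho),\|\cdot\|_2,\epsilon)\le|\bar{\mathfrak{L}}|$.

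\textbf{Step 3 (simplify the constant).} From Step 1,
\[
|\bar{\mathfrak{L}}|\le\Big(\frac{2\rho}{\epsilon}+1\Big)^n .
\]
Since $0<\epsilon\le\rho$, we have $1\le\rho/\epsilon$, so $2\rho/\epsilon+1\le3\rho/\epsilon$. This yields the claimed bound $(3\rho/\epsilon)^n$. The hypothesis $\epsilon\le\rho$ is used only here.
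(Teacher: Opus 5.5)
Your proof is correct and is essentially the standard volumetric argument from Vershynin, which the paper cites without giving a proof of its own. In that argument, a maximal $\epsilon$-separated subset of the ball is an $\epsilon$-net lying inside $B_2^n(\rho)$, as the paper's net definition requires. The disjoint balls of radius $\epsilon/2$ inside $B_2^n(\rho+\epsilon/2)$ then give the bound $(1+2\rho/\epsilon)^n$, and $\epsilon\le\rho$ turns this into $(3\rho/\epsilon)^n$.
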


Next, we introduce four essential lemmas for establishing the
theoretical error bound. Lemma~\ref{lem:1} and Lemma~\ref{lem:conv_compatibility}
provide two necessary inequalities for the proof of Lemma~\ref{lem:covering_PLTD}. Lemma~\ref{lem:covering_PLTD} gives an upper bound for the covering number of
the solution space of the recovery model. Lemma~\ref{lemma08} provides a
framework for proving the error bound of our recovery model.

\begin{lemma}
	\label{lem:1}
	For positive integers $p,n_1,\ldots,n_p$ and a radius $\rho>0$, let
	$
	\Gamma
	=
	\{
	\mathcal{T}
	\in
	\mathbb{R}^{{n_1}
		\times {n_2}
		\times\cdots\times {n_p}}
	:
	\|\mathcal{T}\|_{F}
	\leq
	\rho
	\},
	$
	the \(\epsilon\)-net of \(\Gamma\) is denoted by
	\(\bar{\Gamma}\).
	If
	$
	\bar{\mathcal{T}}\in\bar{\Gamma}
	$
	, we have
	$
	\|\bar{\mathcal{T}}\|_{F}
	\leq
	\rho.
	$
\end{lemma}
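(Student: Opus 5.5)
This is immediate from the definition of an $\epsilon$-net. Definition~\ref{def:06} requires the net to be a subset of the set it covers, so $\bar{\Gamma}\subseteq\Gamma$. Therefore any $\bar{\mathcal{T}}\in\bar{\Gamma}$ also lies in $\Gamma$. By the definition of $\Gamma$, every element $\mathcal{T}$ of $\Gamma$ satisfies $\|\mathcal{T}\|_F\leq\rho$, so in particular $\|\bar{\mathcal{T}}\|_F\leq\rho$.

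The only point that needs care is that Definition~\ref{def:06} really does require $\bar{\mathfrak{L}}\subseteq\mathfrak{L}$. It does: it says "a subset $\bar{\mathfrak{L}} \subseteq \mathfrak{L}$". This is the internal-net convention of \cite{Vershynin_2018}.

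Had external nets, with centers allowed anywhere in $\mathbb{R}^{n_1\times\cdots\times n_p}$, been permitted, the claim could fail. In that case I would project each center radially onto the ball $\{\|\mathcal{T}\|_F\leq\rho\}$. This projection is the metric projection onto a closed convex set, so it is $1$-Lipschitz, and it fixes every point of $\Gamma$. The projected net therefore still covers $\Gamma$ at scale $\epsilon$ and lies inside $\Gamma$.

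Under the definition as stated, this issue does not arise and no step is an obstacle. The lemma is essentially bookkeeping. It will be used later, in Lemma~\ref{lem:covering_PLTD}, to bound the perturbed factors $\bar{\mathbf A},\bar{\mathbf B},\bar{\mathbf c},\bar{\mathcal W}_l$ by the same radii $\alpha_A,\alpha_B,\alpha_c,\beta_l/k$ when the product/Lipschitz perturbation argument is telescoped.
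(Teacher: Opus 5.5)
Your proof is correct and is exactly the paper's argument: by the internal-net definition, $\bar{\Gamma}\subseteq\Gamma$, so every $\bar{\mathcal{T}}\in\bar{\Gamma}$ lies in $\Gamma$ and therefore satisfies $\|\bar{\mathcal{T}}\|_F\leq\rho$. Your aside about external nets is also correct but not needed here: projecting the centers onto the ball works because that metric projection is nonexpansive and fixes $\Gamma$.
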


\begin{proof}
	Since the \(\epsilon\)-net of a set is a subset of that set, we have
	\[
	\bar{\Gamma}
	\subset
	\Gamma.
	\]
	
	Then we can deduce that
	\[
	\bar{\mathcal{T}}
	\in
	\bar{\Gamma}
	\subset
	\Gamma.
	\]
	
	Therefore, according to the definition of
	\(\Gamma\), we obtain
	\[
	\|\bar{\mathcal{T}}\|_{F}
	\leq
	\rho.
	\]
\end{proof}

\begin{lemma}
	\label{lem:conv_compatibility}
	Let $\mathcal{W}\in
	\mathbb{R}^{k\times k\times c_{\rm in}\times c_{\rm out}}$
	be a convolution kernel and
	$\mathcal{V}\in
	\mathbb{R}^{h\times w\times c_{\rm in}}$
	be an input tensor.
	For the zero-padding convolution operator,
	the following inequality holds,
	\begin{equation}
		\|\mathcal{W}\circledast\mathcal{V}\|_{F}
		\leq
		k
		\|\mathcal{W}\|_{F}
		\|\mathcal{V}\|_{F}.
	\end{equation}
\end{lemma}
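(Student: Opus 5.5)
We need to prove Lemma~\ref{lem:conv_compatibility}: $\|\mathcal W\circledast\mathcal V\|_F\le k\|\mathcal W\|_F\|\mathcal V\|_F$ for the zero-padding convolution.

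The plan is to write the convolution entrywise and apply Cauchy--Schwarz to each output entry, then count how many times each input entry is reused. Extend $\mathcal V$ by zeros outside $\{1,\dots,h\}\times\{1,\dots,w\}$, which is exactly what zero padding does. For each output location $(i,j)$ and output channel $o$,
\[
(\mathcal W\circledast\mathcal V)(i,j,o)=\sum_{p=1}^{k}\sum_{q=1}^{k}\sum_{m=1}^{c_{\rm in}}\mathcal W(p,q,m,o)\,\mathcal V(i+p-s,\,j+q-s,\,m),
\]
where $s$ is the fixed padding offset. Whether this is correlation or flipped convolution does not matter for the argument.

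By Cauchy--Schwarz over the index triple $(p,q,m)$,
\[
|(\mathcal W\circledast\mathcal V)(i,j,o)|^2\le\Big(\sum_{p,q,m}\mathcal W(p,q,m,o)^2\Big)\Big(\sum_{p,q,m}\mathcal V(i+p-s,j+q-s,m)^2\Big).
\]
Call the first factor $\|\mathcal W_{:,:,:,o}\|_F^2$ and the second $P_{ij}$, the squared norm of the $k\times k\times c_{\rm in}$ patch of $\mathcal V$ centered at $(i,j)$.

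Summing over $o$ gives $\sum_o\|\mathcal W_{:,:,:,o}\|_F^2=\|\mathcal W\|_F^2$, so
\[
\|\mathcal W\circledast\mathcal V\|_F^2\le\|\mathcal W\|_F^2\sum_{i,j}P_{ij}.
\]

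The key counting step bounds $\sum_{i,j}P_{ij}$. Each entry $\mathcal V(a,b,m)$ lies in the patch at $(i,j)$ only when $a=i+p-s$ and $b=j+q-s$ for some $p,q\in\{1,\dots,k\}$. That leaves at most $k^2$ choices of $(i,j)$, and padded zero entries contribute nothing. Hence $\sum_{i,j}P_{ij}\le k^2\|\mathcal V\|_F^2$, which gives $\|\mathcal W\circledast\mathcal V\|_F^2\le k^2\|\mathcal W\|_F^2\|\mathcal V\|_F^2$. Taking square roots yields the claim.

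The only delicate step is this counting of patch overlaps. It must hold regardless of the output spatial size (same-size padding or full padding) and of the stride. With stride larger than one, each input entry is covered by fewer patches, so the $k^2$ bound still holds. I would state this explicitly so the lemma covers the layers used in $g_{\boldsymbol\theta}$.
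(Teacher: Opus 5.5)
Your proposal is correct and follows essentially the same route as the paper: Cauchy--Schwarz on each output entry, summing over output channels to get $\|\mathcal{W}\|_F^2$, and then bounding the total patch energy by $k^2\|\mathcal{V}\|_F^2$. The only cosmetic difference is in that last count: the paper swaps the sums and bounds, for each of the $k^2$ kernel offsets, the shifted zero-padded copy of $\mathcal{V}$ by $\|\mathcal{V}\|_F^2$, whereas you count at most $k^2$ patches per input entry, which is the same double-counting viewed from the other side.
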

\begin{proof}
	Let
	$
	\mathcal{Q}
	=
	\mathcal{W}\circledast\mathcal{V}
	$
	denote the output tensor.
	For each spatial location $i=1,\ldots,h$, $j=1,\ldots,w$ and
	output channel $m=1,\ldots,c_{\rm out}$,
	the convolution operation can be written as
	\begin{equation}\nonumber
		\mathcal{Q}(i,j,m)
		=
		\sum_{a,b,q}
		\mathcal{W}(a,b,q,m)
		\mathcal{V}_{\rm pad}(i+a,j+b,q),
	\end{equation}
	where $a,b$ range over the $k\times k$ kernel support,
	$q=1,\ldots,c_{\rm in}$ is the input-channel index, and
	$\mathcal{V}_{\rm pad}$ denotes the zero-padded version of $\mathcal{V}$.
	Using the Cauchy--Schwarz inequality, we have
	\begin{equation}\nonumber
		\begin{aligned}
			|\mathcal{Q}(i,j,m)|^2
			&\leq
			(
			\sum_{a,b,q}
			\mathcal{W}^2(a,b,q,m)
			)
			(
			\sum_{a,b,q}
			\mathcal{V}_{\rm pad}^2(i+a,j+b,q)
			)
			\\
			&=
			\|
			\mathcal{W}(:,:,,:,m)
			\|_F^2
			\sum_{a,b,q}
			\mathcal{V}_{\rm pad}^2(i+a,j+b,q).
		\end{aligned}
	\end{equation}
	Therefore, summing over all spatial locations and output
	channels gives
	\begin{equation}\nonumber
		\begin{aligned}
			\|\mathcal{W}\circledast\mathcal{V}\|_F^2
			&=
			\sum_{i,j,m}
			|\mathcal{Q}(i,j,m)|^2
			\\
			&\leq
			\sum_{i,j,m}
			\|
			\mathcal{W}(:,:,,:,m)
			\|_F^2
			\sum_{a,b,q}
			\mathcal{V}_{\rm pad}^2(i+a,j+b,q)
			\\
			&=
			\sum_m
			\|
			\mathcal{W}(:,:,,:,m)
			\|_F^2
			\sum_{a,b,q}
			\sum_{i,j}
			\mathcal{V}_{\rm pad}^2(i+a,j+b,q)
			\\
			&\leq
			\sum_m
			\|
			\mathcal{W}(:,:,,:,m)
			\|_F^2
			\sum_{a,b}
			\sum_q
			\sum_{i,j}
			\mathcal{V}^2(i,j,q)
			\\
			&=
			\sum_m
			\|
			\mathcal{W}(:,:,,:,m)
			\|_F^2
			\sum_{a,b}
			\|\mathcal{V}\|_F^2
			\\
			&=
			k^2
			\sum_m
			\|
			\mathcal{W}(:,:,,:,m)
			\|_F^2
			\|\mathcal{V}\|_F^2
			\\
			&=
			k^2
			\|\mathcal{W}\|_F^2
			\|\mathcal{V}\|_F^2 .
		\end{aligned}
	\end{equation}
	Taking the square root on both sides yields
	\begin{equation}\nonumber
		\|\mathcal{W}\circledast\mathcal{V}\|_F
		\leq
		k
		\|\mathcal{W}\|_F
		\|\mathcal{V}\|_F .
	\end{equation}
	This completes the proof.
\end{proof}

\begin{lemma}
	\label{lem:covering_PLTD}
	
	Let
	$
	\mathfrak{L}
	= 
	\{ \mathcal{X}\in\mathbb{R}^{n_1\times n_2\times n_3} :\;
	\mathcal{X}=g_{\boldsymbol{\theta}}
	\big(
	(\mathbf{A}\mathbf{B})\circ\mathbf{c}+ \mathcal{S}
	\big)
	\},
	$
	where $\mathcal{S}\in\mathbb{R}^{n_1\times n_2\times\hat n_3}$ is a fixed pre-trained latent tensor,
	$\mathbf{A}\in\mathbb{R}^{n_1\times r}$,
	$\mathbf{B}\in\mathbb{R}^{r\times n_2}$, and
	$\mathbf{c}\in\mathbb{R}^{\hat n_3}$.
	Here $\circ$ denotes the outer product, 
	$(\mathbf{A}\mathbf{B})\circ\mathbf{c}\in
	\mathbb{R}^{n_1\times n_2\times\hat n_3}$.
	Here, $r$ is the prescribed factorization rank and $\hat n_3$ is the
	latent-channel dimension.
	The learnable transform $g_{\boldsymbol{\theta}}$ is an $L$-layer convolutional neural network with
	$\boldsymbol{\theta}=\{\mathcal{W}_l\}_{l=1}^{L}$:	
	\begin{equation}\nonumber
		\mathcal{Z}_l
		=
		\sigma
		(
		\mathcal{W}_l\circledast\mathcal{Z}_{l-1}
		),
		\quad l=1,\ldots,L-1 ,
	\end{equation}
	where $
	\mathcal{W}_l
	\in
	\mathbb{R}^{k\times k\times c_{l-1}\times c_l}
	$, $k$ is the spatial kernel size, and $c_{l-1}$ and $c_l$
	represent the numbers of input and output channels, respectively, for
	$l=1,\ldots,L$.
	Given the final latent tensor $\mathcal{H}=(\mathbf{A}\mathbf{B})\circ\mathbf{c}+ \mathcal{S}$, we set $\mathcal{Z}_0=\mathcal{H}$, 
	$
	\mathcal{X} = g_{\boldsymbol{\theta}}(\mathcal{H})
	=
	\mathcal{W}_L\circledast\mathcal{Z}_{L-1}.
	$
	
	Assume $\|\mathbf{A}\|_F\leq \alpha_A$, $\|\mathbf{B}\|_F\leq \alpha_B$, $\|\mathbf{c}\|_2\leq \alpha_c$, and $\|\mathcal{W}_l\|_F\leq \beta_l / k$ for $l=1,\ldots,L$. 
	According to Lemma~\ref{lem:conv_compatibility},
	each convolution layer satisfies, for all compatible tensors
	$\mathcal{V}_1$ and $\mathcal{V}_2$,
	\begin{equation}
		\label{eq:conv_lipschitz}
		\|
		\mathcal{W}_l\circledast\mathcal{V}_1
		-
		\mathcal{W}_l\circledast\mathcal{V}_2
		\|_F
		\leq
		\beta_l
		\|\mathcal{V}_1-\mathcal{V}_2\|_F .
	\end{equation}
	
	The activation function is $\eta$-Lipschitz continuous on compatible tensors:
	\begin{equation}
		\label{eq:activation_lipschitz}
		\|\sigma(\mathcal{V}_1)-\sigma(\mathcal{V}_2)\|_F
		\leq
		\eta
		\|\mathcal{V}_1-\mathcal{V}_2\|_F .
	\end{equation}
	
	Moreover, assume the intermediate tensors are bounded:
	\begin{equation}
		\label{eq:feature_bound}
		\|\mathcal{Z}_l\|_F\leq M_l,
		\quad
		l=0,\ldots,L-1 .
	\end{equation}
	Then there exists a constant $\epsilon_0>0$ such that, for any $0<\epsilon\leq\epsilon_0$, the covering number of $\mathfrak{L}$ satisfies
	\begin{equation}
		\mathcal{N}(\mathfrak{L}, \|\cdot\|_F, \epsilon)
		\leq
		\left(
		\frac{C_0}{\epsilon}
		\right)^{
			H_g
			+r(n_1+n_2)+\hat n_3
		},
	\end{equation}
	where
	$
	H_g
	=
	\sum_{l=1}^{L}
	c_lc_{l-1}k^2
	$, $c_0=\hat n_3$, $c_L=n_3$, $C_0>0$ is a constant independent of $\epsilon$,
	which may depend on the parameter bounds and the network architecture.
	
\end{lemma}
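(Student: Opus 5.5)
The plan is a standard parameter-space covering argument. Build $\epsilon'$-nets for each parameter block separately, then show that the map from parameters to $\mathcal{X}$ is Lipschitz on the bounded parameter set, so that the product net induces an $\epsilon$-net of $\mathfrak{L}$.

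\textbf{Step 1: nets for each parameter block.} View $\mathbf{A}$, $\mathbf{B}$, $\mathbf{c}$ and each $\mathcal{W}_l$ as vectors in Euclidean balls of radii $\alpha_A$, $\alpha_B$, $\alpha_c$ and $\beta_l/k$, with dimensions $n_1r$, $rn_2$, $\hat n_3$ and $k^2c_{l-1}c_l$. By the covering lemma for Euclidean balls, each block admits an $\epsilon_i$-net of size at most $(3\rho_i/\epsilon_i)^{d_i}$, provided $\epsilon_i\le\rho_i$. Lemma~\ref{lem:1} guarantees that the net points obey the same norm bounds as the original parameters. Hence the Lipschitz estimates below hold at net points as well as at the original points.

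\textbf{Step 2: perturbation bound.} Let $(\mathbf{A},\mathbf{B},\mathbf{c},\{\mathcal{W}_l\})$ be a parameter tuple and $(\bar{\mathbf{A}},\bar{\mathbf{B}},\bar{\mathbf{c}},\{\bar{\mathcal{W}}_l\})$ its nearest net point. The fixed $\mathcal{S}$ cancels in the latent difference, so by telescoping,
\[
\|\mathcal{H}-\bar{\mathcal{H}}\|_F=\|(\mathbf{AB})\circ\mathbf{c}-(\bar{\mathbf{A}}\bar{\mathbf{B}})\circ\bar{\mathbf{c}}\|_F\le \alpha_B\alpha_c\epsilon_A+\alpha_A\alpha_c\epsilon_B+\alpha_A\alpha_B\epsilon_c .
\]
This uses $\|(\mathbf{E})\circ\mathbf{c}\|_F=\|\mathbf{E}\|_F\|\mathbf{c}\|_2$ and submultiplicativity of the Frobenius norm.

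For the network layers, write $\Delta_l=\|\mathcal{Z}_l-\bar{\mathcal{Z}}_l\|_F$, where $\bar{\mathcal{Z}}_l$ is computed with the net parameters. Split the layer difference as
\[
\mathcal{W}_l\circledast\mathcal{Z}_{l-1}-\bar{\mathcal{W}}_l\circledast\bar{\mathcal{Z}}_{l-1}=(\mathcal{W}_l-\bar{\mathcal{W}}_l)\circledast\mathcal{Z}_{l-1}+\bar{\mathcal{W}}_l\circledast(\mathcal{Z}_{l-1}-\bar{\mathcal{Z}}_{l-1}).
\]
Apply Lemma~\ref{lem:conv_compatibility} to the first term and \eqref{eq:conv_lipschitz} to the second. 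With the $\eta$-Lipschitz activation \eqref{eq:activation_lipschitz} this gives the recursion
\[
\Delta_l\le\eta\big(kM_{l-1}\epsilon_{W_l}+\beta_l\Delta_{l-1}\big).
\]
The last layer has no activation, so there $\eta$ is replaced by $1$. Unrolling the recursion yields
\[
\|\mathcal{X}-\bar{\mathcal{X}}\|_F\le K\max(\epsilon_A,\epsilon_B,\epsilon_c,\epsilon_{W_1},\dots,\epsilon_{W_L}),
\]
where $K$ depends only on $\eta$, $k$, $\{\beta_l\}$, $\{M_l\}$ and $\alpha_A,\alpha_B,\alpha_c$.

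\textbf{Step 3: counting.} Set all $\epsilon_i=\epsilon/K$. The product of the block nets then maps to an $\epsilon$-net of $\mathfrak{L}$, of cardinality at most
\[
\prod_i(3\rho_iK/\epsilon)^{d_i}\le (C_0/\epsilon)^{\sum_i d_i},\qquad C_0=3K\max_i\rho_i .
\]
Here $\sum_i d_i=r(n_1+n_2)+\hat n_3+H_g$. The condition $\epsilon\le\epsilon_0:=K\min_i\rho_i$ ensures $\epsilon_i\le\rho_i$, as the ball lemma requires. Strictly, the induced net may not lie inside $\mathfrak{L}$ as a subset; the net points are images of admissible parameters, so they are in $\mathfrak{L}$ whenever the feature bounds hold for them. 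If that fails, a covering of radius $\epsilon$ by exterior points gives an interior net at radius $2\epsilon$, which is absorbed into $C_0$.

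\textbf{Main obstacle.} The delicate step is the perturbation recursion, specifically controlling $\|\bar{\mathcal{Z}}_{l-1}\|$ or $\|\mathcal{Z}_{l-1}\|$ at net points. Splitting the difference so that the kernel difference multiplies the original feature $\mathcal{Z}_{l-1}$ (bounded by $M_{l-1}$) and the net kernel $\bar{\mathcal{W}}_l$ multiplies the feature difference (norm bounded via Lemma~\ref{lem:1}) avoids needing any bound on $\bar{\mathcal{Z}}$. I would use exactly this ordering.
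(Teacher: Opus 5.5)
Your proposal is correct and follows essentially the same argument as the paper: Euclidean-ball nets for each parameter block, kept inside the balls by the net-inclusion lemma, then the same three-term telescoping bound on $\|\mathcal{H}-\bar{\mathcal{H}}\|_F$, the same layerwise convolution/activation recursion, and the same product counting. The differences are organizational. You run one combined recursion starting from $\Delta_0=\|\mathcal{H}-\bar{\mathcal{H}}\|_F$ with uniform radii $\epsilon/K$, whereas the paper splits through the intermediate point $g_{\boldsymbol\theta}(\bar{\mathcal{H}})$ with tailored radii; as a result the paper needs the feature bound $M_{l-1}$ on that hybrid pass (original weights, net input), while your ordering needs it only on the original forward pass.
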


\begin{proof}
	
	For arbitrary $0<\epsilon\leq\epsilon_0$, let
	\[
	\begin{aligned}
		\mathfrak{L}_A
		&=
		\{
		\mathbf{A}\in\mathbb{R}^{n_1\times r}:
		\|\mathbf{A}\|_F\leq\alpha_A
		\},
		\\
		\mathfrak{L}_B
		&=
		\{
		\mathbf{B}\in\mathbb{R}^{r\times n_2}:
		\|\mathbf{B}\|_F\leq\alpha_B
		\},
		\\
		\mathfrak{L}_c
		&=
		\{
		\mathbf{c}\in\mathbb{R}^{\hat n_3}:
		\|\mathbf{c}\|_2\leq\alpha_c
		\},
		\\
		\mathfrak{L}_{W_l}
		&=
		\{
		\mathcal{W}_l\in
		\mathbb{R}^{k\times k\times c_{l-1}\times c_l}:
		\|\mathcal{W}_l\|_F\leq\beta_l/k
		\}.
	\end{aligned}
	\]
	
	For arbitrary positive radii $\epsilon_A$, $\epsilon_B$, $\epsilon_c$, and
	$\epsilon_l$, the covering-number bound for Euclidean balls gives corresponding
	$\epsilon_A$-, $\epsilon_B$-, $\epsilon_c$-, and $\epsilon_l$-nets
	$\bar{\mathfrak{L}}_A$, $\bar{\mathfrak{L}}_B$,
	$\bar{\mathfrak{L}}_c$, and $\bar{\mathfrak{L}}_{W_l}$ obey
	\[
	\begin{aligned}
		\mathcal{N}\left(\mathfrak{L}_A, \|\cdot\|_F, \epsilon_A \right)
		&\leq
		\left(
		\frac{3\alpha_A}{\epsilon_A}
		\right)^{n_1r},\\
		\mathcal{N}\left(\mathfrak{L}_B, \|\cdot\|_F, \epsilon_B \right)
		&\leq
		\left(
		\frac{3\alpha_B}{\epsilon_B}
		\right)^{rn_2},
		\\
		\mathcal{N}\left(\mathfrak{L}_c, \|\cdot\|_2, \epsilon_c \right)
		&\leq
		\left(
		\frac{3\alpha_c}{\epsilon_c}
		\right)^{\hat n_3},\\
		\mathcal{N}\left(\mathfrak{L}_{W_l}, \|\cdot\|_F, \epsilon_l \right)
		&\leq
		\left(
		\frac{3\beta_l}{k\epsilon_l}
		\right)^{k^2c_{l-1}c_l},
		\quad l=1,\ldots,L.
	\end{aligned}
	\]
	Now choose the approximation radii
	
	\[\epsilon_A
	=
	\frac{\epsilon}
	{6\alpha_B\alpha_c
		\eta^{L-1}\prod_{l=1}^{L}\beta_l},
	\] 
	\[
	\epsilon_B
	=
	\frac{\epsilon}
	{6\alpha_A\alpha_c
		\eta^{L-1}\prod_{l=1}^{L}\beta_l},
	\]
	\[
	\epsilon_c
	=
	\frac{\epsilon}
	{6\alpha_A\alpha_B
		\eta^{L-1}\prod_{l=1}^{L}\beta_l},
	\]
	\[
	\epsilon_l
	=
	\frac{\epsilon}
	{
		2Lk
		\eta^{L-l}
		\prod_{j=l+1}^{L}\beta_j
		M_{l-1}
	},
	\quad l=1,\ldots,L.
	\]
	
	Here, $\epsilon_0$ is chosen sufficiently small such that
	$\epsilon_A\leq\alpha_A$, $\epsilon_B\leq\alpha_B$,
	$\epsilon_c\leq\alpha_c$, and
	$\epsilon_l\leq\beta_l/k$ for all $l=1,\ldots,L$.
	Accordingly, we construct an $\epsilon_A$-net for $\mathfrak{L}_A$, an
	$\epsilon_B$-net for $\mathfrak{L}_B$, an $\epsilon_c$-net for
	$\mathfrak{L}_c$, and an $\epsilon_l$-net for $\mathfrak{L}_{W_l}$.
	Therefore, there exist
	$\bar{\mathbf{A}}\in\bar{\mathfrak{L}}_A$,
	$\bar{\mathbf{B}}\in\bar{\mathfrak{L}}_B$,
	$\bar{\mathbf{c}}\in\bar{\mathfrak{L}}_c$,
	and
	$\bar{\mathcal{W}}_l\in\bar{\mathfrak{L}}_{W_l}$
	for $l=1,\ldots,L$
	such that
	\[
	\begin{aligned}
		\|\mathbf{A}-\bar{\mathbf{A}}\|_F
		&\leq\epsilon_A,
		&
		\|\mathbf{B}-\bar{\mathbf{B}}\|_F
		&\leq\epsilon_B,
		\\
		\|\mathbf{c}-\bar{\mathbf{c}}\|_2
		&\leq\epsilon_c,
		&
		\|\mathcal{W}_l-\bar{\mathcal{W}}_l\|_F
		&\leq\epsilon_l.
	\end{aligned}
	\]
	
	In the following, we first construct a covering set for
	$
	\{\mathcal{L}\in\mathbb{R}^{n_1\times n_2\times\hat n_3}:
	\mathcal{L}=(\mathbf{A}\mathbf{B})\circ\mathbf{c}\}
	$. Given $\bar{\mathbf{A}}$, $\bar{\mathbf{B}}$, and $\bar{\mathbf{c}}$ as approximations of $\mathbf{A}$, $\mathbf{B}$, and $\mathbf{c}$, respectively, define
	$\bar{\mathcal{L}}=(\bar{\mathbf{A}}\bar{\mathbf{B}})\circ\bar{\mathbf{c}}$.
	The difference between $\mathcal{L}$ and $\bar{\mathcal{L}}$ can be decomposed as
	
	\[
	\begin{aligned}
		\mathcal{L}-\bar{\mathcal{L}}
		=&\,
		(\mathbf{A}\mathbf{B})\circ\mathbf{c}
		-
		(\bar{\mathbf{A}}\bar{\mathbf{B}})
		\circ\bar{\mathbf{c}}
		\\
		=&\,
		\big((\mathbf{A}-\bar{\mathbf{A}})\mathbf{B}\big)
		\circ\mathbf{c}
		+
		\big(\bar{\mathbf{A}}
		(\mathbf{B}-\bar{\mathbf{B}})\big)
		\circ\mathbf{c}
		+
		(\bar{\mathbf{A}}\bar{\mathbf{B}})
		\circ
		(\mathbf{c}-\bar{\mathbf{c}}).
	\end{aligned}
	\]
	
	Then,
	\[
	\begin{aligned}
		\|\mathcal{L}-\bar{\mathcal{L}}\|_F
		\leq\;&
		\|\mathbf{A}-\bar{\mathbf{A}}\|_F
		\|\mathbf{B}\|_F
		\|\mathbf{c}\|_2
		\\
		&+
		\|\bar{\mathbf{A}}\|_F
		\|\mathbf{B}-\bar{\mathbf{B}}\|_F
		\|\mathbf{c}\|_2
		\\
		&+
		\|\bar{\mathbf{A}}\|_F
		\|\bar{\mathbf{B}}\|_F
		\|\mathbf{c}-\bar{\mathbf{c}}\|_2
		\\
		\leq\;&
		\alpha_B\alpha_c\epsilon_A
		+
		\alpha_A\alpha_c\epsilon_B
		+
		\alpha_A\alpha_B\epsilon_c
		\\
		=\;&
		\frac{\epsilon}
		{2\eta^{L-1}\prod_{l=1}^{L}\beta_l}.
	\end{aligned}
	\]
	
	In the theoretical analysis, the pre-trained latent tensor \(\mathcal{S}\) is treated as a fixed tensor.
	Define $\bar{\mathcal{H}}=\bar{\mathcal{L}} + \mathcal{S}$; since
	$\mathcal{H}=\mathcal{L} + \mathcal{S}$, we have
	\begin{equation}
		\|(\mathcal{L}+ \mathcal{S})-(\bar{\mathcal{L}}+ \mathcal{S})\|_F
		\leq
		\frac{\epsilon}{2\eta^{L-1}
			\prod_{l=1}^{L}\beta_l}.
	\end{equation}
	Next, we will deduce that $\mathfrak{L}$ has an $\epsilon$-net 
	$
	\bar{\mathfrak{L}}
	= 
	\big\{ \bar{\mathcal{X}}\in\mathbb{R}^{n_1\times n_2\times n_3} :\;
	\bar{\mathcal{X}}=g_{\bar{\boldsymbol{\theta}}}
	\big(
	(\bar{\mathbf{A}}\bar{\mathbf{B}})
	\circ\bar{\mathbf{c}}+ \mathcal{S}
	\big),\;
	\bar{\mathbf{A}}\in\bar{\mathfrak{L}}_A,\;
	\bar{\mathbf{B}}\in\bar{\mathfrak{L}}_B,\;
	\bar{\mathbf{c}}\in\bar{\mathfrak{L}}_c,\;
	\bar{\boldsymbol{\theta}}=\{\bar{\mathcal{W}}_l\}_{l=1}^{L},\;
	\bar{\mathcal{W}}_l\in\bar{\mathfrak{L}}_{W_l},\;
	l=1,\ldots,L
	\big\}.
	$
	According to the formula of tensor $\mathcal{X}$, we have
	\begin{equation}\nonumber
		\|\mathcal{X}-\bar{\mathcal{X}}\|_F
		\leq
		\|
		g_{\boldsymbol{\theta}}(\mathcal{H})
		-
		g_{\boldsymbol{\theta}}(\bar{\mathcal{H}})
		\|_F
		+
		\|
		g_{\boldsymbol{\theta}}(\bar{\mathcal{H}})
		-
		g_{\bar{\boldsymbol{\theta}}}
		(\bar{\mathcal{H}})
		\|_F .
	\end{equation}
	For fixed network parameters, set
	$\mathcal{Z}_0=\mathcal{H}$ and
	$\widetilde{\mathcal{Z}}_0=\bar{\mathcal{H}}$. For $l=1,\ldots,L-1$, let
	$
	\mathcal{Z}_l
	=
	\sigma(\mathcal{W}_l\circledast\mathcal{Z}_{l-1})
	$, and
	$
	\widetilde{\mathcal{Z}}_l
	=
	\sigma(\mathcal{W}_l\circledast\widetilde{\mathcal{Z}}_{l-1}).
	$
	
	Using
	(\ref{eq:activation_lipschitz})
	and
	(\ref{eq:conv_lipschitz}), we have
	\begin{equation}\nonumber
		\|\mathcal{Z}_l-\widetilde{\mathcal{Z}}_l\|_F
		\leq
		\eta\beta_l
		\|
		\mathcal{Z}_{l-1}
		-
		\widetilde{\mathcal{Z}}_{l-1}
		\|_F .
	\end{equation}
	Therefore,
	\begin{equation}
		\|
		g_{\boldsymbol{\theta}}(\mathcal{H})
		-
		g_{\boldsymbol{\theta}}(\bar{\mathcal{H}})
		\|_F
		\leq
		\eta^{L-1}
		\prod_{l=1}^{L}\beta_l
		\|
		\mathcal{H}-\bar{\mathcal{H}}
		\|_F 
		\leq \frac{\epsilon}{2}.
	\end{equation}
	For the fixed input $\bar{\mathcal{H}}$ and the selected network
	parameter net centers, set
	$
	\mathcal{U}_0=\bar{\mathcal{U}}_0=\bar{\mathcal{H}}
	$
	and, for $l=1,\ldots,L-1$, define
	$\mathcal{U}_l=\sigma(\mathcal{W}_l\circledast\mathcal{U}_{l-1})$ and
	$\bar{\mathcal{U}}_l=\sigma(\bar{\mathcal{W}}_l\circledast\bar{\mathcal{U}}_{l-1})$.
	For $l=0,\ldots,L-1$, set
	$
	E_l
	=
	\|
	\mathcal{U}_l-\bar{\mathcal{U}}_l
	\|_F .
	$
	Then, for $l=1,\ldots,L-1$,
	\begin{equation}\nonumber
		\begin{aligned}
			E_l
			&=
			\left\|
			\sigma
			\left(
			\mathcal{W}_l \circledast \mathcal{U}_{l-1}
			\right)
			-
			\sigma
			\left(
			\bar{\mathcal{W}}_l \circledast \bar{\mathcal{U}}_{l-1}
			\right)
			\right\|_F
			\\
			&\leq
			\eta
			\left\|
			\mathcal{W}_l \circledast \mathcal{U}_{l-1}
			-
			\bar{\mathcal{W}}_l \circledast \bar{\mathcal{U}}_{l-1}
			\right\|_F
			\\
			&=
			\eta
			\left\|
			(\mathcal{W}_l-\bar{\mathcal{W}}_l)
			\circledast
			\mathcal{U}_{l-1}
			+
			\bar{\mathcal{W}}_l
			\circledast
			(
			\mathcal{U}_{l-1}
			-
			\bar{\mathcal{U}}_{l-1}
			)
			\right\|_F
			\\
			&\leq
			\eta
			\|
			(\mathcal{W}_l-\bar{\mathcal{W}}_l)
			\circledast
			\mathcal{U}_{l-1}
			\|_F
			+
			\eta
			\|
			\bar{\mathcal{W}}_l
			\circledast
			(
			\mathcal{U}_{l-1}
			-
			\bar{\mathcal{U}}_{l-1}
			)
			\|_F 
			\\
			&\leq
			\eta k \epsilon_lM_{l-1}
			+
			\eta\beta_lE_{l-1}.
		\end{aligned}
	\end{equation}
	For the last layer, which has no activation function, define
	\begin{equation}\nonumber
		\begin{aligned}
			E_L
			&=
			\|
			\mathcal{W}_L\circledast\mathcal{U}_{L-1}
			-
			\bar{\mathcal{W}}_L\circledast\bar{\mathcal{U}}_{L-1}
			\|_F
			\\
			&\leq
			\|
			(\mathcal{W}_L-\bar{\mathcal{W}}_L)
			\circledast
			\mathcal{U}_{L-1}
			\|_F
			+
			\|
			\bar{\mathcal{W}}_L
			\circledast
			(
			\mathcal{U}_{L-1}
			-
			\bar{\mathcal{U}}_{L-1}
			)
			\|_F
			\\
			&\leq
			k\epsilon_LM_{L-1}
			+
			\beta_LE_{L-1}.
		\end{aligned}
	\end{equation}
	Expanding the recursion gives
	\begin{equation}\nonumber
		E_L
		\leq
		\sum_{l=1}^{L}
		k\eta^{L-l}
		\prod_{j=l+1}^{L}\beta_j
		\epsilon_lM_{l-1}.
	\end{equation}
	With the above choice of $\epsilon_l$,
	$
	E_L
	\leq
	\sum_{l=1}^{L}
	\frac{\epsilon}{2L}
	=
	\frac{\epsilon}{2}.
	$
	Thus,
	\begin{equation}
		\|
		g_{\boldsymbol{\theta}}(\bar{\mathcal{H}})
		-
		g_{\bar{\boldsymbol{\theta}}}
		(\bar{\mathcal{H}})
		\|_F
		\leq
		\frac{\epsilon}{2}.
	\end{equation}
	Combining two parts,
	$
	\|\mathcal{X}-\bar{\mathcal{X}}\|_F
	\leq
	\epsilon .
	$
	According to the above formulas, $\bar{\mathfrak{L}}$ is an $\epsilon$-net of
	$\mathfrak{L}$. Hence, the covering
	number
	\begin{equation}
		\begin{aligned}
			\mathcal{N}(\mathfrak{L},\|\cdot\|_F,\epsilon)
			&\leq
			\mathcal{N}\left(\mathfrak{L}_A, \|\cdot\|_F, \epsilon_A \right)
			\mathcal{N}\left(\mathfrak{L}_B, \|\cdot\|_F, \epsilon_B \right)
			\\
			&\quad\times
			\mathcal{N}\left(\mathfrak{L}_c, \|\cdot\|_2, \epsilon_c \right)
			\\
			&\quad\times
			\prod_{l=1}^{L}
			\mathcal{N}\left(\mathfrak{L}_{W_l}, \|\cdot\|_F, \epsilon_l \right)
			\\
			&\leq
			\left(
			\frac{3\alpha_A}{\epsilon_A}
			\right)^{n_1r}
			\left(
			\frac{3\alpha_B}{\epsilon_B}
			\right)^{rn_2}
			\left(
			\frac{3\alpha_c}{\epsilon_c}
			\right)^{\hat n_3}
			\\
			&\quad\times
			\prod_{l=1}^{L}
			\left(
			\frac{3 \beta_l}{k\epsilon_l}
			\right)^{
				k^2c_{l-1}c_l
			}
			\\
			&\leq
			\left(
			\frac{C_0}{\epsilon}
			\right)^{
				H_g
				+r(n_1+n_2)+\hat n_3
			}.
		\end{aligned}
	\end{equation}
	Therefore, Lemma~\ref{lem:covering_PLTD} holds.
	
\end{proof}

\begin{lemma}\cite{fan2021multi}\label{lemma08}
	Let $\mathfrak{L}$ be a set of tensors of size
	$n_1\times n_2\times n_3$, let
	$\mathcal{Y}\in\mathbb{R}^{n_1\times n_2\times n_3}$, and set
	$N=n_1n_2n_3$.
	For $\epsilon>0$, denote the $\epsilon$-covering number of $\mathfrak{L}$
	under the Frobenius norm by
	$\mathcal{N}(\mathfrak{L},\|\cdot\|_F,\epsilon)$.
	Let $\Omega\subseteq
	\{1,\ldots,n_1\}\times\{1,\ldots,n_2\}\times\{1,\ldots,n_3\}$
	be a randomly sampled observation index set and let $|\Omega|$ be its cardinality.
	Suppose that
	$
	\|\mathcal{Y}\|_{\infty}\leq\delta
	$, $
	\sup_{\mathcal{X}\in \mathfrak{L}}
	\|\mathcal{X}\|_{\infty}\leq\delta
	$, then
	\begin{equation*}
		\begin{aligned}
			&\sup_{\mathcal{X}\in\mathfrak{L}}
			\left|
			\frac{\|\mathcal{Y}-\mathcal{X}\|_F}{\sqrt{N}}
			-
			\frac{\|\mathcal{P}_{\Omega}(\mathcal{Y}-\mathcal{X})\|_F}{\sqrt{|\Omega|}}
			\right|
			\\
			&\quad\leq
			\frac{2\epsilon}{\sqrt{|\Omega|}}
			+
			\left(
			\frac{8\delta^4}{|\Omega|}
			\left[
			\log\mathcal{N}(\mathfrak{L},\|\cdot\|_F,\epsilon)+\log N
			\right]
			\right)^{1/4}.
		\end{aligned}
	\end{equation*}
	with probability at least $1-2 N^{-1}$, where
	$\mathcal{P}_{\Omega}(\cdot)$ retains the entries indexed by $\Omega$ and sets the others to zero. See \cite{fan2021multi} for the detailed proof of Lemma~\ref{lemma08}.
\end{lemma}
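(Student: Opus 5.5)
The plan is the standard two-step argument: a concentration inequality for a single fixed tensor, then a union bound over an $\epsilon$-net, followed by a deterministic transfer from the net to all of $\mathfrak{L}$.

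\emph{Sampling model.} I would model the random index set $\Omega$ as $|\Omega|$ indices drawn uniformly at random from the $N$ entries. For a \emph{fixed} $\mathcal{X}\in\mathfrak{L}$, define for each sampled index $\omega$ the variable $\xi_\omega=(\mathcal{Y}-\mathcal{X})_\omega^2$. Then:
\begin{itemize}
\item $\frac{1}{|\Omega|}\sum_{\omega\in\Omega}\xi_\omega=\|\mathcal{P}_\Omega(\mathcal{Y}-\mathcal{X})\|_F^2/|\Omega|$;
\item each $\xi_\omega$ has mean $\|\mathcal{Y}-\mathcal{X}\|_F^2/N$;
\item since $\|\mathcal{Y}\|_\infty,\|\mathcal{X}\|_\infty\le\delta$, each $\xi_\omega\in[0,4\delta^2]$.
\end{itemize}
Hoeffding's inequality then gives
\begin{equation*}
\Pr\Big(\Big|\tfrac{\|\mathcal{P}_\Omega(\mathcal{Y}-\mathcal{X})\|_F^2}{|\Omega|}-\tfrac{\|\mathcal{Y}-\mathcal{X}\|_F^2}{N}\Big|>t\Big)\le 2\exp\Big(-\tfrac{|\Omega|t^2}{8\delta^4}\Big).
\end{equation*}
This covers sampling with replacement. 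For sampling without replacement, I would invoke Hoeffding's observation that the same bound holds.

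\emph{Union bound over the net.} Let $\bar{\mathfrak{L}}\subseteq\mathfrak{L}$ be a minimal $\epsilon$-net, of size $\mathcal{N}=\mathcal{N}(\mathfrak{L},\|\cdot\|_F,\epsilon)$. Because the net is contained in $\mathfrak{L}$ (as in Lemma~\ref{lem:1}), its points also satisfy the $\ell_\infty$ bound $\delta$. Apply the inequality above to every net point and take a union bound. Choosing
\begin{equation*}
t=\big(8\delta^4(\log\mathcal{N}+\log N)/|\Omega|\big)^{1/2}
\end{equation*}
makes the total failure probability at most $2\mathcal{N}e^{-\log\mathcal{N}-\log N}=2N^{-1}$.

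\emph{From squared to unsquared norms.} On this event, use the elementary inequality $|a-b|\le\sqrt{|a^2-b^2|}$ for $a,b\ge0$. This holds because $|a-b|^2\le|a-b|(a+b)$. It yields
\begin{equation*}
\Big|\tfrac{\|\mathcal{Y}-\bar{\mathcal{X}}\|_F}{\sqrt N}-\tfrac{\|\mathcal{P}_\Omega(\mathcal{Y}-\bar{\mathcal{X}})\|_F}{\sqrt{|\Omega|}}\Big|\le t^{1/2}
\end{equation*}
for all $\bar{\mathcal{X}}\in\bar{\mathfrak{L}}$ simultaneously. Here $t^{1/2}$ is exactly the fourth-root term in the statement.

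\emph{Transfer to all of $\mathfrak{L}$.} For arbitrary $\mathcal{X}\in\mathfrak{L}$, pick $\bar{\mathcal{X}}\in\bar{\mathfrak{L}}$ with $\|\mathcal{X}-\bar{\mathcal{X}}\|_F\le\epsilon$. By the triangle inequality and $\|\mathcal{P}_\Omega(\cdot)\|_F\le\|\cdot\|_F$:
\begin{itemize}
\item the population terms differ by at most $\epsilon/\sqrt N\le\epsilon/\sqrt{|\Omega|}$, using $|\Omega|\le N$;
\item the sampled terms differ by at most $\epsilon/\sqrt{|\Omega|}$.
\end{itemize}
Adding these two errors to the net bound produces $2\epsilon/\sqrt{|\Omega|}+t^{1/2}$. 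Taking the supremum over $\mathcal{X}\in\mathfrak{L}$ completes the argument.

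\emph{Expected obstacle.} The delicate point is the sampling model, not the algebra. If $\Omega$ is drawn with replacement, repeated indices must be counted with multiplicity in $\mathcal{P}_\Omega$. The bound $\|\mathcal{P}_\Omega(\cdot)\|_F\le\|\cdot\|_F$ used in the transfer step then needs care: with multiplicity, the sampled term only satisfies $\|\mathcal{P}_\Omega(\mathcal{X}-\bar{\mathcal{X}})\|_F\le\sqrt{m}\,\epsilon$, where $m$ is the maximum multiplicity. I would therefore adopt uniform sampling without replacement, where $\mathcal{P}_\Omega$ is a genuine coordinate projection, and justify Hoeffding in that setting. This matches the convention of \cite{fan2021multi}.
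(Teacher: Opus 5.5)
Your proof is correct and is essentially the argument of \cite{fan2021multi}, to which the paper defers without giving a proof of its own. That argument runs as follows: Hoeffding's inequality (valid for uniform sampling without replacement) is applied to each point of an internal $\epsilon$-net with $\xi_\omega\in[0,4\delta^2]$, followed by a union bound, the inequality $|a-b|\le\sqrt{|a^2-b^2|}$, and a triangle-inequality transfer using $|\Omega|\le N$, and the constant $8\delta^4$ and failure probability $2N^{-1}$ come out exactly as stated.
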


\begin{theorem}[Recovery Error Bound] \label{thm:baseline1}
	Let $\hat{\mathcal{X}}\in\mathbb{R}^{n_1\times n_2\times n_3}$ be the underlying tensor. For any fixed pre-trained latent tensor
	$\mathcal{S}\in
	\mathbb{R}^{n_1\times n_2\times\hat n_3}$,
	let $\mathcal{X}$ be recovered from entries indexed by the random observation
	index set
	$\Omega\subseteq
	\{1,\ldots,n_1\}\times\{1,\ldots,n_2\}\times\{1,\ldots,n_3\}$.
	Consider the model class
	\[
	\mathfrak{L}_{pre}
	:=
	\left\{
	\mathcal{X}\in\mathbb{R}^{n_1\times n_2\times n_3}
	:\;
	\mathcal{X}=g_{\boldsymbol{\theta}}\!\big((\mathbf{A}\mathbf{B})\circ\mathbf{c}+ \mathcal{S}\big)
	\right\},
	\]
	where $\mathbf{A}\in\mathbb{R}^{n_1\times r}$,
	$\mathbf{B}\in\mathbb{R}^{r\times n_2}$,
	$\mathbf{c}\in\mathbb{R}^{\hat n_3}$, $\circ$ denotes the outer product,
	and $g_{\boldsymbol{\theta}}$ is the learnable transform with
	$\boldsymbol{\theta}=\{\mathcal{W}_l\}_{l=1}^{L}$. Assume that
	all the hypotheses of Lemma~\ref{lem:covering_PLTD} hold for
	$\mathfrak{L}_{pre}$; namely,
	$\|\mathbf{A}\|_F\leq\alpha_A$,
	$\|\mathbf{B}\|_F\leq\alpha_B$,
	$\|\mathbf{c}\|_2\leq\alpha_c$,
	$g_{\boldsymbol{\theta}}$ has the $L$-layer convolutional architecture specified in
	Lemma~\ref{lem:covering_PLTD},
	$\|\mathcal{W}_l\|_F\leq\beta_l / k$ for $l=1,\ldots,L$,
	the activation function is $\eta$-Lipschitz continuous, and the intermediate
	tensors satisfy $\|\mathcal{Z}_l\|_F\leq M_l$ uniformly over all admissible
	inputs and network parameters for $l=0,\ldots,L-1$. Moreover, suppose that $\Omega$ is randomly sampled. In addition, assume that
	\[
	\|\hat{\mathcal{X}}\|_{\infty}\leq\delta,
	\qquad
	\sup_{\mathcal{X}\in \mathfrak{L}_{pre}}
	\|\mathcal{X}\|_{\infty}\leq\delta.
	\]
	Set $N=n_1n_2n_3$,
	$H_g=\sum_{l=1}^{L}c_lc_{l-1}k^2$, and
	$\mathcal{C}=r(n_1+n_2)+\hat n_3+H_g$, and let $C_0>0$ be the
	constant from Lemma~\ref{lem:covering_PLTD}.
	Then there exists a constant $\epsilon_0>0$ such that, for any $0<\epsilon\leq\epsilon_0$, with probability at least $1-2N^{-1}$,
	\begin{equation}
		\begin{aligned}
			\frac{\|\hat{\mathcal{X}}-\mathcal{X}\|_F}{\sqrt{N}}
			&\le \frac{\|\mathcal{P}_{\Omega}(\hat{\mathcal{X}}-\mathcal{X})\|_F}{\sqrt{|\Omega|}} + \frac{2\epsilon}{\sqrt{|\Omega|}} \\
			&\quad + \left( \frac{8\delta^4 \left(\mathcal{C} \log\left(\frac{C_0}{\epsilon}\right) + \log N \right)}{|\Omega|} \right)^{1/4}.
		\end{aligned}
	\end{equation}
\end{theorem}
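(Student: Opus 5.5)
The plan is to combine the two lemmas proved just before the statement: the covering-number bound of Lemma~\ref{lem:covering_PLTD} for the model class, and the uniform deviation inequality of Lemma~\ref{lemma08}.

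First, I would note that $\mathfrak{L}_{pre}$ is exactly the class $\mathfrak{L}$ of Lemma~\ref{lem:covering_PLTD}. The pre-trained tensor $\mathcal S$ is fixed and independent of $\Omega$, and all hypotheses are assumed: the norm bounds on $\mathbf A,\mathbf B,\mathbf c,\mathcal W_l$, the $\eta$-Lipschitz activation, and the uniform feature bounds $M_l$. Lemma~\ref{lem:covering_PLTD} therefore provides $\epsilon_0>0$ and $C_0>0$ such that, for every $0<\epsilon\le\epsilon_0$,
\[
\log \mathcal N(\mathfrak{L}_{pre},\|\cdot\|_F,\epsilon)\le \mathcal C\log\left(\frac{C_0}{\epsilon}\right),
\qquad \mathcal C=r(n_1+n_2)+\hat n_3+H_g .
\]
The point to stress here is that $\mathcal S$ contributes no exponent to the covering number. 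It shifts the latent input but adds no free parameters, so the net is built only over $(\mathbf A,\mathbf B,\mathbf c,\boldsymbol\theta)$.

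Second, I would apply Lemma~\ref{lemma08} with $\mathcal Y=\hat{\mathcal X}$ and $\mathfrak L=\mathfrak L_{pre}$. The required conditions $\|\hat{\mathcal X}\|_\infty\le\delta$ and $\sup_{\mathcal X\in\mathfrak L_{pre}}\|\mathcal X\|_\infty\le\delta$ are assumed. This gives, with probability at least $1-2N^{-1}$, a bound on the supremum over $\mathcal X\in\mathfrak L_{pre}$ of the absolute gap between $\|\hat{\mathcal X}-\mathcal X\|_F/\sqrt N$ and $\|\mathcal P_\Omega(\hat{\mathcal X}-\mathcal X)\|_F/\sqrt{|\Omega|}$. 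That bound is $2\epsilon/\sqrt{|\Omega|}$ plus the fourth-root term involving $\log\mathcal N+\log N$.

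Third, I would drop the absolute value in the one direction we need. Since the bound is uniform over the class, it holds in particular for the recovered $\mathcal X$, even though $\mathcal X$ depends on $\Omega$ through the optimization; this is why uniformity is essential. Substituting the covering bound from the first step into the fourth-root term, which is monotone in $\log\mathcal N$, yields exactly the stated inequality, with the same $\epsilon_0$ and $C_0$.

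The only genuine subtlety is to make sure $\epsilon_0$ and $C_0$ in Lemma~\ref{lem:covering_PLTD} do not depend on $\Omega$ or on the particular recovered tensor. They depend only on $\alpha_A,\alpha_B,\alpha_c,\beta_l,M_l,\eta,k,L$. The feature bounds $M_l$ must therefore hold uniformly over all admissible parameters, which the theorem assumes explicitly. Granting this, the argument is a direct two-line composition of the lemmas.
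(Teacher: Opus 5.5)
Your proposal is correct and matches the paper's proof. The paper first uses Lemma~\ref{lem:covering_PLTD} to get $\log\mathcal{N}(\mathfrak{L}_{pre},\|\cdot\|_F,\epsilon)\le\mathcal{C}\log(C_0/\epsilon)$, then applies Lemma~\ref{lemma08} with $\mathcal{Y}=\hat{\mathcal{X}}$ and $\mathfrak{L}=\mathfrak{L}_{pre}$, and keeps only the needed side of the absolute value. Your two extra remarks are sound and are left implicit in the paper: the bound must hold uniformly because the recovered $\mathcal{X}$ depends on $\Omega$, and $\epsilon_0$ and $C_0$ must not depend on $\Omega$.
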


\begin{proof}
	By the covering number bound established in the previous lemma~\ref{lem:covering_PLTD}, we have
	\[
	\mathcal{N}(\mathfrak{L}_{pre},\|\cdot\|_F,\epsilon)
	\le
	\left(
	\frac{C_0}{\epsilon}
	\right)^{
		\mathcal{C}
	}.
	\]
	Hence,
	\[
	\begin{aligned}
		\log \mathcal{N}(\mathfrak{L}_{pre},\|\cdot\|_F,\epsilon)
		&\le
		\mathcal{C}
		\log\left(\frac{C_0}{\epsilon}\right),
	\end{aligned}
	\]
	
	Applying Lemma~\ref{lemma08} with
	$\mathcal{Y}=\hat{\mathcal{X}}$ and $\mathfrak{L}=\mathfrak{L}_{pre}$, we obtain
	\begin{equation}
		\begin{aligned}
			\frac{\|\hat{\mathcal{X}}-\mathcal{X}\|_F}{\sqrt{N}}
			&\le \frac{\|\mathcal{P}_{\Omega}(\hat{\mathcal{X}}-\mathcal{X})\|_F}{\sqrt{|\Omega|}} + \frac{2\epsilon}{\sqrt{|\Omega|}} \\
			&+ \left( \frac{8\delta^4 \left( \mathcal{C} \log\left(\frac{C_0}{\epsilon}\right) + \log N \right)}{|\Omega|} \right)^{1/4}.
		\end{aligned}
	\end{equation}
	This completes the proof.
\end{proof}

\begin{remark}[Advantage of Pre-Trained Common Structure in Tensor Modeling]
	\label{rem:pretrained_advantage}
	The pre-trained latent tensor $\mathcal{S}$ provides the common structure
	inherited from the pre-trained large vision model. Since $\mathcal{S}$ is fixed,
	it introduces no additional parameter dimension in the covering-number exponent.
	Therefore, PLTD only needs to characterize the instance-specific structure of
	the target tensor through the learnable low-rank latent tensor
	\[
	\mathcal{L}
	=
	(\mathbf{A}\mathbf{B})\circ\mathbf{c},
	\]
	which is parameterized by
	$r(n_1+n_2)+\hat n_3$ scalar parameters. According to
	Lemma~\ref{lem:covering_PLTD}, the resulting model class satisfies
	\[
	\log\mathcal{N}
	(\mathfrak{L}_{pre},\|\cdot\|_F,\epsilon)
	\leq
	\big(
	H_g+r(n_1+n_2)+\hat n_3
	\big)
	\log\left(\frac{C_0}{\epsilon}\right).
	\]
	
	In contrast, most untrained deep tensor decomposition methods rely on complex neural networks to model each
	target tensor from scratch. Let $d_{\mathrm{un}}$ denote the number of 
	learnable parameters required for latent tensor modeling. In general,
	\[
	d_{\mathrm{un}}
	\gg
	r(n_1+n_2)+\hat n_3.
	\]

	This indicates that, by leveraging the
	fixed pre-trained latent tensor $\mathcal{S}$ to provide informative common
	structures, PLTD avoids modeling each target tensor entirely from scratch and only requires a lightweight low-rank latent
	tensor to characterize the instance-specific structure. This
	substantially reduces the model complexity while retaining strong tensor
	modeling capability. Consequently, PLTD achieves an unprecedented balance among higher recovery fidelity, fewer learnable parameters, and smaller carbon footprint.
\end{remark}
\end{document}